\def\eqref#1{equation~\ref{#1}}
\def\1{\bm{1}}
\DeclareMathAlphabet{\mathsfit}{\encodingdefault}{\sfdefault}{m}{sl}
\SetMathAlphabet{\mathsfit}{bold}{\encodingdefault}{\sfdefault}{bx}{n}
\DeclareMathOperator*{\argmax}{arg\,max}
\newtheorem{theorem}{Theorem}
\newtheorem*{theorem*}{Theorem}
\newtheorem{proposition}[theorem]{Proposition}
\newtheorem{prooftheorem}{Proof}[section]
\definecolor{lightblue}{rgb}{0.051, 0.063, 0.796} 
\definecolor{lightcoral}{rgb}{0.79607843, 0.04705882, 0.04313725} 
\title{Exploration by Running Away from the Past}
\author{Paul-Antoine Le Tolguenec\\
ISAE-Supaero\\
Airbus\\
Toulouse, France \\
\And
Yann Besse \\
Airbus \\
Toulouse, France \\
\And
Florent Teichteil-Koenigsbuch \\
Airbus \\
Toulouse, France \\
\And
Dennis G. Wilson \\
ISAE-Supaero \\
Université de Toulouse \\
Toulouse, France \\
\And
Emmanuel Rachelson \\
ISAE-Supaero \\
Université de Toulouse \\
Toulouse, France \\
}
\begin{document}

\maketitle

\begin{abstract}
The ability to explore efficiently and effectively is a central challenge of reinforcement learning.
In this work, we consider exploration through the lens of information theory.
Specifically, we cast exploration as a problem of maximizing the Shannon entropy of the state occupation measure.
This is done by maximizing a sequence of divergences between distributions representing an agent's past behavior and its current behavior.
Intuitively, this encourages the agent to explore new behaviors that are distinct from past behaviors.
Hence, we call our method RAMP, for ``$\textbf{R}$unning $\textbf{A}$way fro$\textbf{m}$ the $\textbf{P}$ast.''
A fundamental question of this method is the quantification of the distribution change over time.
We consider both the Kullback-Leibler divergence and the Wasserstein distance to quantify divergence between successive state occupation measures, and explain why the former might lead to undesirable exploratory behaviors in some tasks. 
We demonstrate that by encouraging the agent to explore by actively distancing itself from past experiences, it can effectively explore mazes and a wide range of behaviors on robotic manipulation and locomotion tasks.
\end{abstract}

\section{Introduction}
Exploration is essential in reinforcement learning (RL) as it allows agents to discover optimal strategies in complex environments. Without it, agents risk becoming stuck in suboptimal policies, lacking the diverse experiences needed to learn effectively. Despite a long history of study, with fundamental algorithms like R-max \citep{brafman2002r}, UCRL \citep{auer2006logarithmic}, and E3 \citep{kearns2002near}, exploration remains a major challenge in modern RL.

One method for encouraging exploration is providing intrinsic motivation to the agent; our approach falls into this category. The agent aims to maximize this additional (intrinsic) reward which motivates exploration.
Various intrinsic reward models specific to exploration have been developed  \citep{brafman2002r,
auer2006logarithmic,
bellemare2016unifying,eysenbach2018diversity,badia2020never}. 
Amongst these are methods based on the principle of \emph{optimism in the face of uncertainty} \citep{munos2014bandits}, where agents are encouraged to explore under-visited areas of the state space by assigning high reward values to uncertain states.
While these methods apply well to tabular state spaces, in high-dimensional or continuous state spaces, these methods must rely on parameterized functions to represent uncertainty, which may result in inconsistent behavior \citep{pathak2017curiosity, burda2018exploration}.

Information theory provides a useful perspective on exploration. For example, one can define a finite set of skills which have some descriptor per skill. Maximizing the Mutual Information (MI) between the descriptors of these skills, and the states which they visit, yields good exploratory behaviors for this set of skills \citep{eysenbach2018diversity}. 
Similarly, it has been demonstrated that maximizing the Shannon entropy of the state distribution encourages exploration \citep{liu2021behavior,hazan2019provably, lee2019efficient}.
However, these approaches often rely on probability density estimators, and finding a relevant density estimator for an environment can be challenging.


This study introduces a method for exploration that aims at achieving a high Shannon entropy score of the distribution representing the agent's experiences.
We show that this objective can be reframed as the maximization of a sequence of Kullback-Leibler (KL) divergences between successive state occupation measures.
This new objective leads to the intuitive use of a simple classifier as a density estimator.
In other words, the goal of exploration, represented by high Shannon entropy over the agent's experiences, can be achieved by iteratively separating an agent's past experiences with its most recent ones.
This results in an algorithm called RAMP, for ``\textbf{R}unning \textbf{A}way fro\textbf{m} the \textbf{P}ast,'' as the agent explores by distancing itself from its past experiences.

We evaluate the performance of RAMP using multiple metrics, including the state space coverage and the maximum score achieved by a policy trained to maximize a specific reward model.
In environments where exploration could benefit from a metric relevant to the state space, we suggest using the Wasserstein distance as an alternative to the KL divergence.
We study the difference between these two measures by proposing two versions of the algorithm: $\text{RAMP}_{\textcolor{lightblue}{\text{KL}}}$, which uses KL divergence, and $\text{RAMP}_{\textcolor{lightcoral}{\mathcal{W}}}$, which uses Wasserstein distance.
We compare RAMP to several baselines developed for exploration tasks and across various environments such as mazes, locomotion tasks, and robotics tasks.
We find that RAMP is competitive in exploration to state-of-the-art methods such as LSD \citep{park2022lipschitz} on robotics tasks, and that it leads to more effective exploration on maze and locomotion tasks.
We conclude that this reformulation of the Shannon entropy objective can open further avenues in the domain of exploration in RL. All our implementations are available at the following repository: \href{https://anonymous.4open.science/r/Exploration_by_running_away_from_the_past_039B}{GitHub repository}.


\section{Problem Statement}
\label{sec:problem_statement}
We first formally define the objectives of the RAMP algorithm. Let us consider a reward-free Markov decision process \citep[MDP]{puterman2014markov} $(\mathcal{S},\mathcal{A}, P, \delta_0)$ where $\mathcal{S}$ is the state space, $\mathcal{A}$ the action space, $P$ the transition function and $\delta_0$ the initial state distribution. 
A behavior policy $\pi_{\theta_n}(s)$ parameterized by $\theta_n$ maps states to distributions over actions. Here, $n$ corresponds to an epoch in the policy optimization sequence. 
Given a fixed horizon $T$, the average occupation time, or state occupancy measure density induced by $\pi_{\theta_n}=\pi_n$ over the state space is: 
$$
\rho_n(s) =\underset{\substack{s_1 \sim \delta_0\\ a_t \sim \pi_{n}(\cdot|s_t) \\ s_{t+1} \sim P(\cdot|s_t,a_t)}}{\mathbb{E}} \left[ \frac{1}{T} \sum_{t=1}^{T} \mathbbm{1}_s(s_t) \right] 
$$
Note that the work presented herein applies seamlessly when $T$ tends to infinity.
%
Given a parameter $\beta \in (0,1)$, let $\mu_n$ be the $(1-\beta)$-discounted mixture of past state occupancies, up to epoch $n$:
%
%
\begin{align*}
  \mu_n(s) &= \beta\sum_{k=1}^{n}(1-\beta)^{n-k}\rho_k(s) & \mu_{n+1}(s) &= \beta\underbrace{\rho_{n+1}(s)}_{\text{The present}} + (1-\beta)\underbrace{\mu_n(s)}_{\text{The past}}
\end{align*}
Consider epoch $n+1$, $\rho_{n+1}$ denotes the agent's current occupancy measure density, representing its present behavior, while $\mu_n$ reflects its past experience. In practice, a sample of $\mu_n$ can be maintained using a replay buffer. At each epoch, the replay buffer is updated by retaining a proportion $(1-\beta)$ from the previous buffer and incorporating a proportion $\beta$ from the new distribution. Intuitively, $(1-\beta)$ specifies the extent to which past distributions are retained in $\mu_n$. It should be noted that this definition differs from typical practice in Deep Reinforcement Learning, where it is generally assumed that all data is stored in the replay buffer \citep{mnih2013playing}. 

We write $H_n=H_{\mu_n}[S]$, the Shannon entropy of distribution $\mu_n$. 
Optimizing the diversity of occupied states at epoch $n$, by maximizing $H_n$, is an appealing objective for promoting exploration up to epoch $n$. 
To this end, one can ensure a monotonic increase of $H_n$ throughout the epochs:
\begin{equation}
    \label{eq:rec_obj}
    H_{n+1}-H_n>0 \hspace{1cm}  \forall n \in \mathbb{N}
\end{equation}

Specifically $\Delta_{n+1} = H_{n+1}-H_n$ is the entropy increase rate. Noting $\text{D}_{\text{KL}}$ as the Kullback-Leibler divergence, $\Delta_{n+1}$ admits the following lower bound 
(proof in Appendix~\ref{app:lower_bound_delta_n}). 
\begin{theorem}[Lower Bound on \(\Delta_{n+1}\)]
\label{thm:lower_bound_delta_n}
$$
\Delta_{n+1} \geq \beta \left(\text{D}_{\text{KL}}(\rho_{n+1}||\mu_{n+1}) + H_{\rho_{n+1}}[S] - H_n\right)
$$
\end{theorem}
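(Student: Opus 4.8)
The plan is to derive an exact identity for $\Delta_{n+1}$ and then discard a single nonnegative term. First I would write $H_{n+1} = -\int_{\mathcal{S}} \mu_{n+1}(s)\log \mu_{n+1}(s)\,ds$ and substitute the recursion $\mu_{n+1} = \beta\rho_{n+1} + (1-\beta)\mu_n$ into the factor \emph{in front of} the logarithm only, leaving the $\log\mu_{n+1}$ untouched. This splits the integral into
$$
H_{n+1} = \beta\left(-\int_{\mathcal{S}}\rho_{n+1}(s)\log\mu_{n+1}(s)\,ds\right) + (1-\beta)\left(-\int_{\mathcal{S}}\mu_n(s)\log\mu_{n+1}(s)\,ds\right),
$$
i.e.\ a $\beta$-weighted combination of two cross-entropy terms.

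Next I would apply the elementary identity $-\int p\log q = H_p[S] + D_{\mathrm{KL}}(p\|q)$ to each term (with $q = \mu_{n+1}$ and $p$ equal to $\rho_{n+1}$, then $\mu_n$). This is legitimate because $\mu_{n+1}$, being a strictly positive mixture, dominates both $\rho_{n+1}$ and $\mu_n$, so both divergences are well-defined (and finite whenever the entropies are). This yields the exact identity
$$
H_{n+1} = \beta\left(H_{\rho_{n+1}}[S] + D_{\mathrm{KL}}(\rho_{n+1}\|\mu_{n+1})\right) + (1-\beta)\left(H_n + D_{\mathrm{KL}}(\mu_n\|\mu_{n+1})\right).
$$
Subtracting $H_n = \beta H_n + (1-\beta)H_n$ from both sides and regrouping gives
$$
\Delta_{n+1} = \beta\left(D_{\mathrm{KL}}(\rho_{n+1}\|\mu_{n+1}) + H_{\rho_{n+1}}[S] - H_n\right) + (1-\beta)\,D_{\mathrm{KL}}(\mu_n\|\mu_{n+1}).
$$
The claimed bound then follows immediately from $(1-\beta)\,D_{\mathrm{KL}}(\mu_n\|\mu_{n+1}) \geq 0$.

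I do not anticipate a genuine obstacle here: the argument is a one-line cross-entropy decomposition once the substitution is performed correctly. The only points requiring a little care are (i) substituting the mixture form only outside the logarithm, not inside it, and (ii) noting the absolute-continuity facts that make the two KL terms meaningful. As an aside, the exact identity also makes transparent when the bound is tight, namely when $D_{\mathrm{KL}}(\mu_n\|\mu_{n+1}) = 0$, i.e.\ when $\rho_{n+1} = \mu_n$, which corresponds to the agent's present behavior having already collapsed onto its past.
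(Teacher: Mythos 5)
Your proof is correct and follows essentially the same route as the paper's: substitute the mixture $\mu_{n+1}=\beta\rho_{n+1}+(1-\beta)\mu_n$ only in the factor outside the logarithm, identify the resulting exact identity $\Delta_{n+1}=\beta\bigl(D_{\mathrm{KL}}(\rho_{n+1}\|\mu_{n+1})+H_{\rho_{n+1}}[S]-H_n\bigr)+(1-\beta)D_{\mathrm{KL}}(\mu_n\|\mu_{n+1})$, and drop the nonnegative last term. Packaging the algebra via the cross-entropy identity $-\int p\log q = H_p[S]+D_{\mathrm{KL}}(p\|q)$ is just a tidier presentation of the same computation, and your tightness remark is a correct observation the paper does not make.
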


This lower bound provides an optimization objective for $\pi_{n+1}$ through the proxy of $\rho_{n+1}$. 
Specifically, at epoch $n+1$, $\mu_n$ is the mixture of past occupancy measures. To ensure a positive entropy increase rate, one searches for $\rho_{n+1}$ 
such that $\text{D}_{\text{KL}}(\rho_{n+1} \,||\, \underbrace{\beta\rho_{n+1}+(1-\beta)\mu_{n}}_{\mu_{n+1}}) + H_{\rho_{n+1}}[S] \geq H_n$. 

A classic proxy in RL for obtaining a large state occupation $\rho^\pi$ for a given policy $\pi$ (and hence, a large $H_{\rho^\pi}[S]$), is the maximization of the policy's entropy on average across encountered states $\mathbb{E}_{s\sim\rho^\pi} [H_{\rho^{\pi}}[A|S]]$.
The rationale is that taking random actions may induce a widespead 
state distribution. 
This hypothesis may not universally apply across all environments, but empirical findings presented in this study, as well as the vast literature on the benefits of entropy regularization for exploration \citep{haarnoja2018soft,geist2019theory,ahmed2019understanding}, provide robust evidence supporting its practical applicability. Therefore, we take take $\pi_{n+1}$ such that:
\begin{equation}
\label{obj:kl}
    \pi_{n+1} = \underset{\pi}{\text{argmax}} \; \underbrace{\underset{s\sim\rho^{\pi}}{\mathbb{E}}\left[\log\left(\frac{\rho^{\pi}(s)}{\beta\rho^{\pi}+(1-\beta)\mu_{n}(s)}\right)\right]}_{\text{repulsive term}}+
    \lambda_A\underset{\substack{s\sim\rho^{\pi}\\a \sim \pi(\cdot|s)}}{\mathbb{E}}\left[-\log(\pi(a|s))\right],
\end{equation}

with $\lambda_A>0$. By maximizing the KL divergence (the repulsive term) in Equation~\ref{obj:kl}, our goal is to generate a distribution $\rho_{n+1}$ that significantly deviates from the mixture of prior distributions. In other words, we wish to explore by ``running away from the past''. Appendix \ref{ap:why} discusses how this term is related to the divergence between $\rho^\pi$ and $\mu_n$.

However, the quality of the exploration achieved in practice by maximizing the KL divergence in Equation~\ref{obj:kl} may vary across contexts. For example, in high-dimensional continuous state spaces such as Ant \citep{1606.01540}, maximizing this divergence can be trivially done by manipulating the agent's joints, therefore visiting new configurations of the agent,  without the need to explore new locomotion behaviors.


The limitation arises from the fact that KL divergence is not sensitive to the underlying geometry of the state space, treating all state changes equivalently without regard to the spatial distance between states. To address this, it can be advantageous to use a divergence that takes the metric structure of the state space into account. The Wasserstein distance \citep{villani2009optimal}, also known as the Earth Mover's Distance, provides a principled way of measuring differences between distributions by considering the cost of transporting mass between states, thereby encouraging exploration that covers distinct regions of the environment. 
The Wasserstein distance is in itself an optimization problem. In our case, it can be defined through the Kantorovich duality as: 
\begin{equation}
\label{def:wasserstein}
   \mathcal{W}(\rho^{\pi}, \beta\rho^{\pi} + (1-\beta)\mu_{n}) = \underset{\parallel f\parallel\leq 1}{\max} \; \underset{s^{+}\sim \rho^{\pi}}{\mathbb{E}} \left[f(s^{+})\right] -
     \underset{s^{-}\sim \beta\rho^{\pi} + (1-\beta)\mu_{n}}{\mathbb{E}} \left[f(s^{-})\right],
\end{equation}
where $f$ is a 1-Lipshitz function for a given metric $d$ defined over $\mathcal{S}$: $\forall (s_1,s_2) \in \mathcal{S}^{2} \parallel f(s_2) - f(s_1) \parallel \leq d(s_1,s_2) $. 
In this work, we use the Temporal Distance $d^{\text{temp}}(s_1,s_2)$ \citep{kaelbling1993learning, hartikainen2019dynamical, durugkar2021adversarial,park2023metra}, which represents the minimum number of steps that must be performed in a Markov chain in order to reach state $s_1$ from $s_2$.
When using the Wasserstein distance instead of the Kullback-Leibler divergence, we take $\pi_{n+1}$ such that: 
\begin{equation}
\label{obj:wasserstein}
    \pi_{n+1} = \underset{\pi}{\text{argmax}} \; 
    \underbrace{\mathcal{W}(\rho^{\pi}, \beta\rho^{\pi} + (1-\beta)\mu_{n})}_{\text{repulsive term}} + \lambda_{A}\underset{\substack{s\sim\rho^{\pi}\\a \sim \pi(\cdot|s)}}{\mathbb{E}}\left[-\log(\pi(a|s))\right].
\end{equation}
Note that this objective does not maximize a lower bound on $\Delta_n$ per se. 
However it remains a well-motivated formulation to promote exploration.
Overall, the objective of achieving high Shannon entropy over experiences naturally leads to the maximization of a divergence between past experiences and recent ones. 
We make this our primary objective in the remainder of the paper, either in the form of Equation~\ref{obj:kl} or~\ref{obj:wasserstein}. We use this framing to encourage exploration by defining the corresponding intrinsic motivation rewards. 



\section{Running away from the past (RAMP)}
\label{sec:method}

The core idea of RAMP is to explore by running away from the past. We now define how we use intrinsic motivation to reward this movement, which leads to the above objectives of maximizing entropy. The RAMP algorithm uses two alternative estimates of distribution divergence, $r_{\text{D}_{\text{KL}}}(s)$ and $r_{\mathcal{W}}(s)$, to reward the agent for moving away from past experiences.

\subsection{Intrinsic reward models}
\label{sec:int_reward_models}

Objectives \ref{obj:kl} and \ref{obj:wasserstein} incorporate repulsive terms that maximize specific divergences between the distributions $\rho^{\pi}$ and $\beta\rho^{\pi}+(1-\beta)\mu_n$. 
Let us define $ r^\pi_{\text{D}_{\text{KL}}}(s) = \log\left(\rho^{\pi}(s) /(\beta\rho^{\pi}(s) + (1-\beta)\mu_{n}(s))\right)$ the reward model based on the first term in Equation~\ref{obj:kl}. 
This term can thus be written $\langle \rho^\pi, r^\pi_{\text{D}_{\text{KL}}} \rangle$. 
Suppose one has access to an estimate $\hat{r}_{\text{D}_{\text{KL}}}(s)$ of $r^\pi_{\text{D}_{\text{KL}}}(s)$. 
Finally, let us define $\pi'$ as a policy that is better or equivalent than $\pi$ for the reward model $\hat{r}_{\text{D}_{\text{KL}}}$. 
Then one can prove that the divergence between $\rho^{\pi'}$ and $\rho^{\pi'}\beta + (1-\beta)\mu_n)$ is larger than that for $\pi$. 
In other terms, $\pi'$ runs further away from the past than $\pi$. Formally, this yields (proof in Appendix \ref{app:proof_kl}):

\begin{theorem}
\label{thm:kl_reward_model}

Given policy $\pi$, let $\varepsilon_1$ be the approximation error of $\hat{r}_{\text{D}_{\text{KL}}}$, i.e. $\|\hat{r}_{\text{D}_{\text{KL}}} - r^\pi_{\text{D}_{\text{KL}}}\|_{\infty} \leq \varepsilon_1$.\\
Let $\pi'$ be another policy and $\varepsilon_0 \in \mathbb{R}$ such that $\|\frac{\rho^{\pi'}}{\rho^{\pi}} - 1\|_{\infty} \geq \varepsilon_0$ ($\rho^{\pi'}$ is close to $\rho^\pi$).\\
Finally, let $\varepsilon_2$ measure how much $\pi'$ improves on $\pi$ for $\hat{r}_{\text{D}_{\text{KL}}}$: $\langle\rho^{\pi'}, \hat{r}_{\text{D}_{\text{KL}}}\rangle - \langle\rho^{\pi}, \hat{r}_{\text{D}_{\text{KL}}}\rangle = \varepsilon_{2}$.\\
If $\varepsilon_2\geq 2\varepsilon_1 - \log(1-\varepsilon_0)$, then
$\text{D}_{\text{KL}}\left(\rho^{\pi'}|| \rho^{\pi'}\beta + (1-\beta)\mu_n\right) \geq \text{D}_{\text{KL}}\left(\rho^{\pi}|| \rho^{\pi}\beta + (1-\beta)\mu_n\right).$
\end{theorem}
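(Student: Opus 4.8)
The plan is to rewrite both sides of the claimed inequality using the identity, already established in the excerpt, that $\text{D}_{\text{KL}}(\rho^\pi \,||\, \beta\rho^\pi + (1-\beta)\mu_n) = \langle \rho^\pi, r^\pi_{\text{D}_{\text{KL}}}\rangle$, and likewise $\text{D}_{\text{KL}}(\rho^{\pi'} \,||\, \beta\rho^{\pi'} + (1-\beta)\mu_n) = \langle \rho^{\pi'}, r^{\pi'}_{\text{D}_{\text{KL}}}\rangle$. So the goal becomes showing $\langle \rho^{\pi'}, r^{\pi'}_{\text{D}_{\text{KL}}}\rangle \geq \langle \rho^{\pi}, r^{\pi}_{\text{D}_{\text{KL}}}\rangle$. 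The obstacle is that $\pi'$ is only assumed to improve on $\pi$ with respect to the \emph{fixed} estimated reward $\hat{r}_{\text{D}_{\text{KL}}}$, whereas the divergence at $\pi'$ pairs $\rho^{\pi'}$ with the \emph{moving} target $r^{\pi'}_{\text{D}_{\text{KL}}}$, which itself depends on $\rho^{\pi'}$. I will therefore chain approximation steps to pass from $r^{\pi'}_{\text{D}_{\text{KL}}}$ to $\hat{r}_{\text{D}_{\text{KL}}}$ and then to $r^{\pi}_{\text{D}_{\text{KL}}}$.

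For the first part, since $\rho^{\pi'}$ is a probability distribution, $\langle \rho^{\pi'}, r^{\pi'}_{\text{D}_{\text{KL}}}\rangle \geq \langle \rho^{\pi'}, \hat{r}_{\text{D}_{\text{KL}}}\rangle - \|r^{\pi'}_{\text{D}_{\text{KL}}} - \hat{r}_{\text{D}_{\text{KL}}}\|_\infty$, and by the triangle inequality and the hypothesis $\|\hat{r}_{\text{D}_{\text{KL}}} - r^{\pi}_{\text{D}_{\text{KL}}}\|_\infty \leq \varepsilon_1$ we get $\|r^{\pi'}_{\text{D}_{\text{KL}}} - \hat{r}_{\text{D}_{\text{KL}}}\|_\infty \leq \|r^{\pi'}_{\text{D}_{\text{KL}}} - r^{\pi}_{\text{D}_{\text{KL}}}\|_\infty + \varepsilon_1$. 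For the second part, the definition of $\varepsilon_2$ together with $\|\hat{r}_{\text{D}_{\text{KL}}} - r^{\pi}_{\text{D}_{\text{KL}}}\|_\infty \leq \varepsilon_1$ gives $\langle \rho^{\pi'}, \hat{r}_{\text{D}_{\text{KL}}}\rangle = \langle \rho^{\pi}, \hat{r}_{\text{D}_{\text{KL}}}\rangle + \varepsilon_2 \geq \langle \rho^{\pi}, r^{\pi}_{\text{D}_{\text{KL}}}\rangle - \varepsilon_1 + \varepsilon_2$. Putting these together yields $\langle \rho^{\pi'}, r^{\pi'}_{\text{D}_{\text{KL}}}\rangle \geq \langle \rho^{\pi}, r^{\pi}_{\text{D}_{\text{KL}}}\rangle + \varepsilon_2 - 2\varepsilon_1 - \|r^{\pi'}_{\text{D}_{\text{KL}}} - r^{\pi}_{\text{D}_{\text{KL}}}\|_\infty$.

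The heart of the argument, and the step I expect to be the most delicate, is bounding $\|r^{\pi'}_{\text{D}_{\text{KL}}} - r^{\pi}_{\text{D}_{\text{KL}}}\|_\infty \leq -\log(1-\varepsilon_0)$ using the pointwise closeness of $\rho^{\pi'}$ and $\rho^\pi$. Write $c(s) = \rho^{\pi'}(s)/\rho^{\pi}(s)$, so the closeness hypothesis reads $|c(s) - 1| \leq \varepsilon_0$ with $\varepsilon_0 \in (0,1)$, hence $c(s) > 0$. A direct computation collapses the difference of the two log-ratios to $\log\big( c(s)(a+b)/(c(s)a + b) \big)$, where $a = \beta\rho^{\pi}(s) \geq 0$ and $b = (1-\beta)\mu_n(s) \geq 0$. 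Since $c(a+b)/(ca+b) - 1 = (c-1)b/(ca+b)$ has the same sign as $c-1$ and absolute value at most $|c-1|$ (because $ca + b \geq b$), this ratio lies between $c(s)$ and $1$, hence in $[1-\varepsilon_0, 1+\varepsilon_0]$; taking logarithms and using $\log(1+\varepsilon_0) \leq -\log(1-\varepsilon_0)$ (equivalently $1 - \varepsilon_0^2 \leq 1$) gives $|r^{\pi'}_{\text{D}_{\text{KL}}}(s) - r^{\pi}_{\text{D}_{\text{KL}}}(s)| \leq -\log(1-\varepsilon_0)$ pointwise, hence in sup-norm.

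Substituting this bound into the display at the end of the second paragraph gives $\langle \rho^{\pi'}, r^{\pi'}_{\text{D}_{\text{KL}}}\rangle \geq \langle \rho^{\pi}, r^{\pi}_{\text{D}_{\text{KL}}}\rangle + \varepsilon_2 - 2\varepsilon_1 + \log(1-\varepsilon_0)$, which is $\geq \langle \rho^{\pi}, r^{\pi}_{\text{D}_{\text{KL}}}\rangle$ exactly when $\varepsilon_2 \geq 2\varepsilon_1 - \log(1-\varepsilon_0)$; translating back through the divergence identity completes the proof. One caveat worth flagging: as literally written the statement has $\|\rho^{\pi'}/\rho^{\pi} - 1\|_\infty \geq \varepsilon_0$, but the parenthetical remark ``$\rho^{\pi'}$ is close to $\rho^\pi$'' and the appearance of $\log(1-\varepsilon_0)$ indicate the intended hypothesis is the upper bound $\|\rho^{\pi'}/\rho^{\pi} - 1\|_\infty \leq \varepsilon_0$ with $\varepsilon_0 \in (0,1)$, which is what the argument above uses.
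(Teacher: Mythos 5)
Your proof is correct and follows essentially the same route as the paper's: both express the divergences as $\langle\rho, r^\pi_{\text{D}_{\text{KL}}}\rangle$, chain through $\hat{r}_{\text{D}_{\text{KL}}}$ at a cost of $2\varepsilon_1$ plus the improvement $\varepsilon_2$, and absorb the change of occupancy measure into an additive $\log(1-\varepsilon_0)$ term. The only cosmetic difference is that you establish the two-sided pointwise bound $\|r^{\pi'}_{\text{D}_{\text{KL}}} - r^{\pi}_{\text{D}_{\text{KL}}}\|_\infty \leq -\log(1-\varepsilon_0)$ directly via the identity $r^{\pi'}_{\text{D}_{\text{KL}}}(s)-r^{\pi}_{\text{D}_{\text{KL}}}(s)=\log\bigl(c(a+b)/(ca+b)\bigr)$, whereas the paper uses only the one-sided inequality $\rho^{\pi'} \geq (1-\varepsilon_0)\rho^{\pi}$ together with monotonicity of $x \mapsto \log\bigl(x/(\beta x + (1-\beta)y)\bigr)$; you also correctly flag the same sign typo ($\geq$ should be $\leq$) in the closeness hypothesis.
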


For the repulsive term in Objective \ref{obj:wasserstein}, the reward model is defined as
$r_{\mathcal{W}}(s) = f^{*}(s)$, 
where $f^{*}$ belongs to the solutions of the problem defined in Equation~\ref{def:wasserstein}. Theorem \ref{thm:w_reward_model} states that maximizing the reward model $r_{\mathcal{W}}$ leads to the maximization of the Wasserstein distance (proof in Appendix~\ref{app:proof_w}).
\begin{theorem}
\label{thm:w_reward_model}
Given policy $\pi$, let $\varepsilon_1$ be the approximation error of $\hat{r}_{\mathcal{W}}$, i.e. $\|\hat{r}_{\mathcal{W}} - r^\pi_{\mathcal{W}}\|_{\infty} \leq \varepsilon_1$.\\
Let $\pi'$ be another policy and $\varepsilon_2$ measure how much $\pi'$ improves on $\pi$ for $\hat{r}_{\mathcal{W}}$: $\langle\rho^{\pi'}, \hat{r}_{\mathcal{W}}\rangle - \langle\rho^{\pi}, \hat{r}_{\mathcal{W}}\rangle = \varepsilon_{2}$. If $\varepsilon_2\geq 2\varepsilon_1(1+\beta)$, then
$\mathcal{W}(\rho^{\pi'}, \beta\rho^{\pi'}+\mu_n(\beta-1)) > \mathcal{W}(\rho^{\pi}, \beta\rho^{\pi}+\mu_n(\beta-1))$.
\end{theorem}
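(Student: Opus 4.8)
To prove Theorem~\ref{thm:w_reward_model}, the plan is to exploit Kantorovich duality (Equation~\ref{def:wasserstein}) together with the fact that $r^\pi_{\mathcal{W}} = f^{*}$ is, by construction, an \emph{optimal} $1$-Lipschitz test function for the transport problem defining $\mathcal{W}(\rho^{\pi}, \beta\rho^{\pi}+(1-\beta)\mu_n)$, while remaining a \emph{feasible} (merely suboptimal) test function for the transport problem defining $\mathcal{W}(\rho^{\pi'}, \beta\rho^{\pi'}+(1-\beta)\mu_n)$, since the Lipschitz constraint refers to a metric $d$ on $\mathcal{S}$ that does not depend on the policy.

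First I would write both Wasserstein distances in dual form. Optimality of $r^\pi_{\mathcal{W}}$ for $\pi$ gives the exact identity $\mathcal{W}(\rho^{\pi}, \beta\rho^{\pi}+(1-\beta)\mu_n) = \langle \rho^{\pi}, r^\pi_{\mathcal{W}}\rangle - \langle \beta\rho^{\pi}+(1-\beta)\mu_n, r^\pi_{\mathcal{W}}\rangle = (1-\beta)\big(\langle \rho^{\pi}, r^\pi_{\mathcal{W}}\rangle - \langle \mu_n, r^\pi_{\mathcal{W}}\rangle\big)$, where the last step merely expands the mixture. Feasibility of $r^\pi_{\mathcal{W}}$ for $\pi'$ gives the lower bound $\mathcal{W}(\rho^{\pi'}, \beta\rho^{\pi'}+(1-\beta)\mu_n) \geq (1-\beta)\big(\langle \rho^{\pi'}, r^\pi_{\mathcal{W}}\rangle - \langle \mu_n, r^\pi_{\mathcal{W}}\rangle\big)$. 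Subtracting and using $1-\beta>0$ (as $\beta\in(0,1)$), the theorem reduces to establishing the strict inequality $\langle \rho^{\pi'} - \rho^{\pi},\, r^\pi_{\mathcal{W}}\rangle > 0$.

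Next I would pass from the true reward model to the estimate $\hat{r}_{\mathcal{W}}$ against which the improvement $\varepsilon_2$ is actually measured. Decomposing $\langle \rho^{\pi'} - \rho^{\pi}, r^\pi_{\mathcal{W}}\rangle = \langle \rho^{\pi'} - \rho^{\pi}, \hat{r}_{\mathcal{W}}\rangle + \langle \rho^{\pi'} - \rho^{\pi}, r^\pi_{\mathcal{W}} - \hat{r}_{\mathcal{W}}\rangle$, the first term equals $\varepsilon_2$ by hypothesis, and the second is bounded via Hölder's inequality by $\|\rho^{\pi'}-\rho^{\pi}\|_{1}\,\|r^\pi_{\mathcal{W}} - \hat{r}_{\mathcal{W}}\|_{\infty} \leq 2\varepsilon_1$, since $\rho^{\pi'}$ and $\rho^{\pi}$ are probability measures. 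Hence $\langle \rho^{\pi'} - \rho^{\pi}, r^\pi_{\mathcal{W}}\rangle \geq \varepsilon_2 - 2\varepsilon_1$, which is strictly positive whenever $\varepsilon_2 > 2\varepsilon_1$; the stated hypothesis $\varepsilon_2 \geq 2\varepsilon_1(1+\beta)$ is a (mildly conservative) sufficient condition, strictly exceeding $2\varepsilon_1$ as soon as $\varepsilon_1,\beta>0$. Chaining this with the reduction above yields $\mathcal{W}(\rho^{\pi'}, \beta\rho^{\pi'}+(1-\beta)\mu_n) > \mathcal{W}(\rho^{\pi}, \beta\rho^{\pi}+(1-\beta)\mu_n)$.

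The step I expect to need the most care is the transfer of the test function between the two transport problems: one must ensure the $1$-Lipschitz constraint is taken with respect to a single fixed metric $d$ on $\mathcal{S}$ (the temporal distance serving here as a fixed reference), so that the optimizer $f^{*}$ computed for $\pi$ is admissible for $\pi'$, and one must invoke duality only to \emph{lower-bound} $\mathcal{W}(\rho^{\pi'},\cdot)$ — tightness there is not required — while using it as an exact value for $\pi$. An alternative bookkeeping that bounds the $\hat r_{\mathcal W}$-to-$r^\pi_{\mathcal W}$ error separately against each of the two mixture measures $\beta\rho^{\pi'}+(1-\beta)\mu_n$ and $\beta\rho^{\pi}+(1-\beta)\mu_n$ — rather than collapsing the common $(1-\beta)$ factor as above — produces an extra $\beta$-weighted error contribution, which is what yields the $(1+\beta)$ factor in the threshold of the statement; everything else reduces to routine inner-product estimates with the $\ell_1$/$\ell_\infty$ Hölder bound.
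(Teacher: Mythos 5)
Your proof is correct and follows the same skeleton as the paper's: express both Wasserstein distances via Kantorovich duality, use $r^\pi_{\mathcal{W}}=f^{*}$ as the exact optimizer for $\pi$ and as a merely feasible ($1$-Lipschitz with respect to the fixed temporal metric, hence policy-independent) test function for $\pi'$, then shuttle between $r^\pi_{\mathcal{W}}$ and $\hat{r}_{\mathcal{W}}$ with two applications of the $\varepsilon_1$ sup-norm bound and insert the $\varepsilon_2$ improvement. The one substantive difference is the algebraic bookkeeping, and there yours is the more careful one. You collapse $\langle\rho^{\pi},f\rangle-\langle\beta\rho^{\pi}+(1-\beta)\mu_n,f\rangle$ into $(1-\beta)\bigl(\langle\rho^{\pi},f\rangle-\langle\mu_n,f\rangle\bigr)$, divide out the common positive factor, and reduce everything to $\langle\rho^{\pi'}-\rho^{\pi},r^\pi_{\mathcal{W}}\rangle\geq\varepsilon_2-2\varepsilon_1>0$; this yields the conclusion under the weaker threshold $\varepsilon_2>2\varepsilon_1$. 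The paper instead combines $\langle\rho^{\pi},f\rangle-\beta\langle\rho^{\pi},f\rangle$ into $(1+\beta)\langle\rho^{\pi},f\rangle$ rather than $(1-\beta)\langle\rho^{\pi},f\rangle$ — an algebra slip — and it is this spurious $(1+\beta)$ coefficient, multiplied into the two $\varepsilon_1$ error terms, that produces the stated threshold $2\varepsilon_1(1+\beta)$. So your guess that the $(1+\beta)$ arises from bounding the error separately against each mixture is not quite the true origin, but your proof is sound and in fact shows the theorem holds under a strictly weaker hypothesis than stated (the stated one being conservative, hence still sufficient). One shared caveat, affecting the paper's argument equally: to obtain the \emph{strict} inequality claimed in the statement you need $\varepsilon_2-2\varepsilon_1$ strictly positive, which the hypothesis only guarantees when $\varepsilon_1>0$ or $\varepsilon_2>0$; in the degenerate case $\varepsilon_1=\varepsilon_2=0$ both arguments deliver only $\geq$.
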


This poses a critical question: how to estimate $r_{\text{D}_{\text{KL}}}(s)$ and $ r_{\mathcal{W}}(s)$? We start with the simpler of the two, the estimation of $r_{\text{D}_{\text{KL}}}$.
\subsection{\texorpdfstring{Estimating $r_{\text{D}_{\text{KL}}}$}{Estimating r{\text{D}{\text{KL}}}}}
\label{subsec:r_kl}
As proposed by \citet{eysenbach2020c}, estimating the log of the ratio between two different distributions can be seen as a contrastive learning problem. Consider a neural network with parameters $\phi$, $f_{\phi}: \mathcal{S} \to \mathbb{R}$, and the following labeling:
\begin{align}
\left\{
\begin{aligned}
 s^{+}\sim \rho^{\pi} \iff L = 1 \\
 s^{-} \sim \beta\rho^{\pi} + (1-\beta)\mu_{n} \iff L = 0
\end{aligned}
\right.
\end{align}
To solve this classification problem, one can minimize the following loss:
\begin{equation}
\label{eq:classification}
   \mathcal{L}_{\text{D}_{\text{KL}}}(\phi)= - \underset{\substack{s^{+} \sim \rho^{\pi} \\ s^{-} \sim \beta\rho^{\pi} + (1-\beta)\mu_{n}}}{\mathbb{E}} \left[ \log\left(\sigma(f_{\phi}(s^{+})\right) + \log\left(1-\sigma(f_{\phi}(s^{-})\right)\right]
\end{equation}
When the proportions of positive and negative samples are the same (the probability of label 0, $P(L=0)$, is the same as the probability of label 1, $P(L=1)$), Bayes' rule gives: 
$$
P(L=1 | s ) = \frac{P(s|L=1)}{P(s|L=0) + P(s|L=1)} = \frac{\rho^{\pi}(s)}{\rho^{\pi}(s) + \beta\rho^{\pi}(s) + (1-\beta)\mu_{n}(s)}
$$
Using the sigmoid function $\sigma(f_{\phi}(s))$ to regress the labels, we obtain: 
$$
\frac{1}{1+e^{-f_{\phi}(s)}} \approx \frac{\rho^{\pi}(s)}{\rho^{\pi}(s) + \beta\rho^{\pi}(s) + (1-\beta)\mu_{n}(s)} \Leftrightarrow f_{\phi}(s) \approx \log\left(\frac{\rho^{\pi}(s)}{\beta\rho^{\pi}(s) + (1-\beta)\mu_{n}(s)}\right)
$$
Therefore, by solving this simple classification problem, the output of the neural network (without the sigmoid activation) is exactly $r_{\text{D}_{\text{KL}}}$.

\subsection{\texorpdfstring{Estimating $r_{\mathcal{W}}$}{Estimating r{\mathcal{W}}}}
\label{subsec:r_w}
To estimate a solution of the Wasserstein distance between two distributions, the temporal distance can be used \citep{durugkar2021adversarial,park2023metra}.
A solution $f^{*}$ to the Wasserstein optimization problem defined in equation \ref{def:wasserstein} can be approximated by a neural network $f_{\phi}$, using dual gradient descent with a Lagrange multiplier $\lambda$ and a small relaxation constant $\epsilon$. As proposed by \citet{durugkar2021adversarial}, the 1-Lipschitz constraint under the temporal distance is maintained by ensuring that:
$$
\sup_{s \in \mathcal{S}} \left\{ \mathbb{E}_{s' \sim P(\cdot|s, a)} \left[ |f(s) - f(s')| \right] \right\} \leq 1
$$
This is done by minimizing the following loss with SGD: 
\begin{align}
\label{eq:w_classification}
\mathcal{L}_{\mathcal{W}}(\lambda,\phi) = & - \underset{s^{+}\sim \rho^{\pi}}{\mathbb{E}} \left[f(s^{+})\right] +
     \underset{s^{-}\sim \beta\rho^{\pi} + (1-\beta)\mu_{n}}{\mathbb{E}} \left[f(s^{-})\right]  \notag \\
     &  - \lambda \cdot \underset{\substack{s\sim (1+\beta)\rho^{\pi} + (1-\beta)\mu_{n} \\ a \sim \pi(\cdot|s) \\ s' \sim P(\cdot|s,a)}}{\mathbb{E}} \left( \max\left( |f_{\phi}(s) - f_{\phi}(s')| - 1, -\epsilon \right) \right)
\end{align}
In practice, the optimization involves taking gradient steps on $\lambda$ followed by gradient steps on $\phi$. Here, $\lambda$ is adjusted adaptively to weight the constraint, taking high values when the constraint is violated and low (but positive) values when it is satisfied. The resulting neural network $f_{\phi}$ corresponds to the reward model $r_{\mathcal{W}}$.

\begin{figure}[h] 
    \centering
    \includegraphics[width=0.95\textwidth]{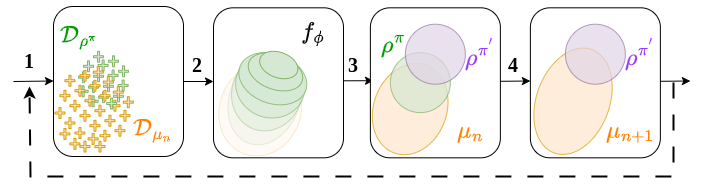} 
    \caption{The four steps of the RAMP algorithm.} 
    \label{fig:ramp_explained} 
\end{figure}

\subsection{The RAMP Algorithm}
\label{sec:algorithm}

The full RAMP algorithm is described in Algorithm~\ref{algo:ramp} and illustrated in Figure \ref{fig:ramp_explained}. The RAMP algorithm follows four key steps. First, current policy $\pi$ is used to \textbf{(1) sample new experiences in the environment} to measure the policy $\pi$'s current occupancy measure. In practice, a buffer $\mathcal{D}_{\rho}$ containing $N^{e}_{\rho}$ episodes from the agent's most recent experiences is used.

Next, the instrinsic reward model is updated to better \textbf{(2) estimate the new versus past experiences}. 
In RAMP, the reward model is represented by a simple neural network $f_{\phi}$ which is the solution to a specific optimization problem.
Two alternate measures of divergence are possible: the KL divergence and the Wasserstein distance.
Thus, the RAMP algorithm has two versions: $\text{RAMP}_{\textcolor{lightblue}{\text{KL}}}$, which maximizes Objective \ref{obj:kl}, and $\text{RAMP}_{\textcolor{lightcoral}{\mathcal{W}}}$, which maximizes Objective \ref{obj:wasserstein}.
For $\text{RAMP}_{\textcolor{lightblue}{\text{KL}}}$, $f_{\phi}$ is determined by solving the contrastive problem in Equation \ref{eq:classification}. 
Alternatively, for $\text{RAMP}_{\textcolor{lightcoral}{\mathcal{W}}}$, $f_{\phi}$ is obtained by solving the constrained optimization problem described in Equation \ref{eq:w_classification}.

The third step \textbf{(3) maximizes the difference between the present and the past} distributions. The reward models proposed by RAMP can be maximized using any RL method. 
This study uses the Soft Actor-Critic (SAC) algorithm \cite{haarnoja2018soft} for all experiments. 
The final step is to \textbf{(4) update the distribution of past experiences} $\mu_n$. 
In practice, only a sample of $\mu_n$ contained in a buffer $\mathcal{D}_{\mu_n}$ is available. 
The goal is to transform this sample throughout learning so that its empirical distribution mirrors $\mu_n$ at each epoch. 
The full details of constructing $\mathcal{D}_{\mu_n}$ are given in Appendix~\ref{ap:mu_n_explanation}.

\begin{algorithm}
\caption{$\text{RAMP}_{(\textcolor{lightblue}{\text{KL}}, \textcolor{lightcoral}{\mathcal{W}})}$}
\label{algo:ramp}
\KwIn{$\beta, N, N^{e}_{\rho}, \lambda_A, \textcolor{lightcoral}{\lambda}$}
\BlankLine
$\pi_{\theta_{0}}$, $f_{\phi}$, $\mathcal{D}_{\mu_0} = \{s \sim \tau_{\pi_{\theta_{0}}}\}$, $\mathcal{D}_{\rho} = \{\}$ \hfill \texttt{// Initialization}
\BlankLine
\For{$\text{epoch} \leftarrow 1$ \KwTo $N+1$}{
    \BlankLine
    $\mathcal{D}_{\rho} = \{\}$ \hfill \texttt{// Reset present experience buffer}
    \BlankLine
    \For{$episode \leftarrow 1$ \KwTo $N^{e}_{\rho}$}{
        $\tau = \{s_t,a_t,s_{t+1}\}_{t\in [0,T]}$ \hfill \texttt{// Sample environment with current policy}
    \BlankLine
    $\mathcal{D}_{\rho}= \mathcal{D}_{\rho} \bigcup \tau$ \hfill \texttt{// Update empirical estimation of $\rho^{\pi}$}
    }
    \textcolor{lightblue}{
    $\phi = \underset{\phi}{\argmax} \: \mathcal{L}_{\text{D}_{\text{KL}}}(\phi)$} or
    \textcolor{lightcoral}{
    $\phi, \lambda = \underset{\phi}{\argmax} \: \mathcal{L}_{\mathcal{W}}(\phi, \lambda)$}
    \BlankLine
    Obtain $\pi_{\theta_{n+1}}$ using SAC to maximize: $E_{s\sim\rho}[f_\phi(s)]$
    \BlankLine
    $\mathcal{D}_{\mu_{n+1}}  = \{ s \sim \mathcal{D}_{\rho} \quad\text{if}\quad A \sim \mathcal{B}(\beta) \quad\text{else}\quad s \sim \mathcal{D}_{\mu_n}\}$ \hfill \texttt{// Update past}
}
\end{algorithm}

We study the RAMP algorithm on a series of control tasks, focusing on exploration of robotic locomotion in the following sections. We compare RAMP to contemporary methods for exploration, which are described below.

\section{Related Work}
\label{sec:related_work}
The use of intrinsic rewards to explore generic state spaces has been extensively studied. In this section, we review 
the corresponding literature, starting with standard exploration bonuses defined in 
tabular cases, and progressing to more complex exploration bonuses using deep neural networks.

\textbf{Tabular case.}
In the tabular case, the most straightforward way to explore a state space effectively is by defining a reward that is inversely proportional to the number of times \( n_s = \sum_{t=1}^{N} \mathds{1}(s_t = s) \) a state has been encountered during the $N$ steps of training. 
This exploration strategy has been used in methods such as E3 \citep{kearns2002near}, R-max \cite{brafman2002r} or UCRL \cite{auer2006logarithmic,jaksch2010near,bourel2020tightening} to improve the theoretical bounds for the algorithm's online performance after a finite number of steps.

\textbf{Uncertainty measure on generic state spaces.}
In general, precisely estimating how often an agent visits a state is not always feasible, especially in continuous or high-dimensional state spaces. To address this, various methods have been proposed to approximate state visit counts. \citet{bellemare2016unifying} introduced a density-based method to estimate visit counts recursively. More recent approaches define a task-specific loss function, $f_{\phi}(s) = l(\phi, s)$, as a proxy for $\frac{1}{n_s}$, using the loss as an intrinsic reward to guide exploration. For example, \citet{shelhamer2016loss} and \citet{jaderberg2016reinforcement} use the VAE loss, while \citet{pathak2017curiosity} proposed the Intrinsic Curiosity Module (ICM), which uses prediction error as an intrinsic reward. To address vanishing rewards, \citet{burda2018exploration} introduced Random Network Distillation (RND), and \citet{badia2020never} extended this with Never Give-Up (NGU), which adds a trajectory-based bonus for continued exploration even when uncertainty decreases.

\textbf{Information Theory based algorithms.}
In \citet{hazan2019provably}, exploration is posed as the maximization of the Shanon Entropy $H_{\mu}[S] = \mathbb{E}_{s\sim\mu} 
\left[-\log(\mu(s))\right]$ defined for the replay buffer distribution $\mu$. 
They show that optimizing a policy for the reward model $r(s) = -\log(\mu(s))$ leads to state occupation entropy maximization. 
\citet{liu2021behavior} proposed an unsupervised active pre-training (APT) method that learns a representation of the state space, which is subsequently used by a particle-based density estimator aimed at maximizing $H_\mu[S]$ .
\citet{eysenbach2018diversity} introduced the notion of skill-based exploration where the policy is conditioned by a skill descriptor, and propose a method to maximize the Mutual Information (MI) between states visited by skills and their descriptors. 
The objective of skill-based exploration methods  \citep{gregor2016variational,sharma2019dynamics,campos2020explore,
kamienny2021direct} is to derive a policy that leads to very different and distinguishable behaviors, when conditioned with different realizations of $Z$. 
This concept was then hybridized with an uncertainty measure by \citet{lee2019efficient} to tackle the exploration limitations of MI based methods.

\textbf{Mixing Information Theory and metrics.}
Information theory offers tools to compare distributions, typically without considering the underlying metric. However, in continuous or high-dimensional spaces, using a relevant metric can improve exploration efficiency. \citet{park2022lipschitz} and \citet{park2023controllability} exploit this by combining information theory with Euclidean distance, maximizing the Wasserstein distance version of mutual information using the Euclidean distance as the state space metric.
 Recently, \citet{park2023metra} furthered this approach, still maximizing the Wasserstein distance, but using the temporal distance.  


\textbf{Compared strengths and weaknesses.} 
Each exploration method has its own advantages and limitations. 
Uncertainty-based methods like ICM \citep{pathak2017curiosity} and RND \citep{burda2018exploration} frame exploration as an adversarial game where the policy maximizes an uncertainty-based reward, while the uncertainty estimator minimizes the estimation error. However, finding an equilibrium in this game can be numerically challenging. 
Entropy maximization methods, such as APT \citep{liu2021behavior}, rely on computationally expensive density estimators with weak theoretical guarantees. Additionally, approaches combining mutual information with a metric often require many samples due to the joint optimization of the agent and a discriminator.

RAMP takes a different approach by incrementally improving the Shannon entropy of the agent's state distribution over time, rather than directly maximizing it. This avoids the costly density estimators used in particle-based methods \citep{liu2021behavior,badia2020never}. Unlike skill-diversity algorithms \citep{eysenbach2018diversity, park2023metra}, RAMP does not require generalizing across behaviors, which may result in more sample-efficient exploration. By focusing on incremental improvement, RAMP offers a promising solution for efficient exploration.

\section{Experiments and results}

In this section, we study the capacity of the RAMP algorithm to lead to exploration in complex environments. We begin with a qualitative analysis of the reward models used by RAMP to better characterize the objectives of RAMP.  
We then perform a quantitative analysis of the exploration obtained by RAMP, using coverage of the state space as the exploration metric.
Finally, we analyze the ability of RAMP to explore when also using an additional, extrinsic, reward.

The evaluation is conducted on three different sets of tasks, which are illustrated and further detailed in Appendix~\ref{ap:benchmarks}. The first is maze navigation, for which there are three custom mazes of varying difficulty. Secondly, we use five robotic locomotion environments from the MuJoCo platform \citep{todorov2012mujoco}: Ant, HalfCheetah, Hopper, Humanoid, and Walker2d. Finally, we study exploration in robot control tasks from the Gymnasium platform \citep{gymnasium_robotics2023github}: Fetch Reach, Fetch Push, and Fetch Slide.

\subsection{Understanding RAMP's reward models}

We first aim to illustrate how RAMP constructs consecutive coverage distributions to encourage exploration. Figure \ref{fig:kl_maze} shows the positional information of a robot in the U-maze environment at a given training epoch. The color gradient corresponds to the density estimated by the classifier of $\text{RAMP}_{\textcolor{lightblue}{\text{KL}}}$ for the states contained in $\mathcal{D}_{\mu_n}$ (the brown shape) and $\mathcal{D}_{\rho}$ (the green shape), illustrating that the classifier learns to distinguish the distributions of past states $\mu_n$ and present states $\rho^\pi$. The reward produced by the classifier encourages the agent to explore new areas by moving away from the past experiences contained in $\mathcal{D}_{\mu_n}$. In this low-dimensional state space, the KL divergence gives a good approximation of the difference in distributions, so we can expect that $\text{RAMP}_{\textcolor{lightblue}{\text{KL}}}$ will motivate the exploration of new states.

\begin{figure}[h]
    \centering
    \begin{subfigure}{0.30\textwidth}
        \centering
        \textbf{$\text{RAMP}_{\textcolor{lightblue}{\text{KL}}}$}
        \includegraphics[width=0.85\textwidth]{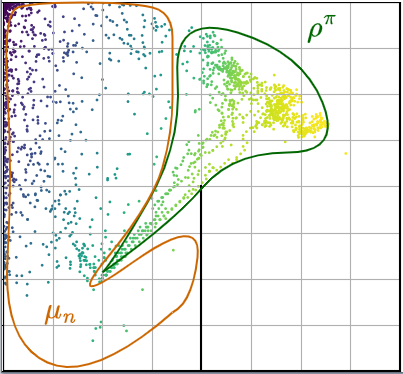}
        \caption{$XY$-coordinates of the agent in the U-maze.}
        \label{fig:kl_maze}
    \end{subfigure}
    \hfill
    \begin{subfigure}{0.68\textwidth}
        \begin{subfigure}{0.4\textwidth}
            \centering
            \textbf{$\text{RAMP}_{\textcolor{lightblue}{\text{KL}}}$}
            \includegraphics[width=\textwidth]{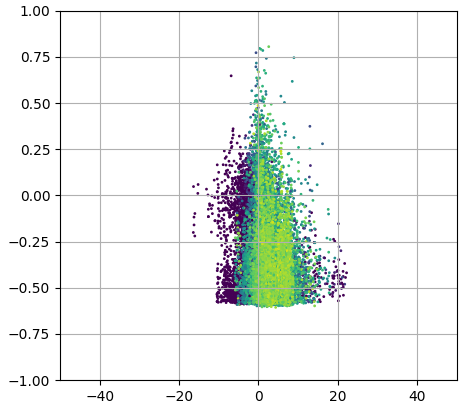}
        \end{subfigure}
        \hfill
        \begin{subfigure}{0.4\textwidth}
            \centering
            \textbf{$\text{RAMP}_{\textcolor{lightcoral}{\mathcal{W}}}$}
            \includegraphics[width=\textwidth]{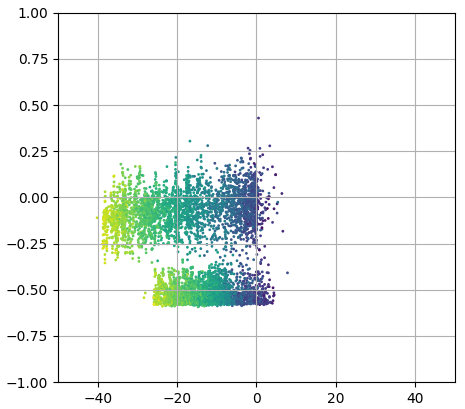}
        \end{subfigure}
        \hspace{0.001cm}
        \begin{subfigure}{0.082\textwidth}
            \includegraphics[width=\textwidth]{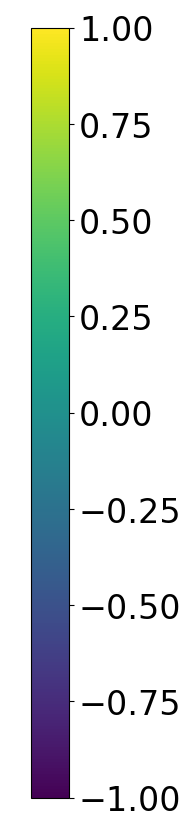}
        \end{subfigure}
        
        \caption{$YZ$-coordinates of the HalfCheetah's torso. The color indicates the density used as the reward model for $\text{RAMP}_{\textcolor{lightblue}{\text{KL}}}$ and $\text{RAMP}_{\textcolor{lightcoral}{\mathcal{W}}}$.}
        \label{fig:kl_vs_w}
    \end{subfigure}
    \caption{(a) An illustration of the different experience distributions on the U-maze. (b) A comparison between $\text{RAMP}_{\textcolor{lightblue}{\text{KL}}}$ and $\text{RAMP}_{\textcolor{lightcoral}{\mathcal{W}}}$ on HalfCheetah. Color indicates the reward estimate given by $f_{\phi}$, normalized between -1 and 1.}
\end{figure}

However, as discussed in Section \ref{sec:problem_statement}, maximizing the KL divergence between $\mu_n$ and $\rho^{\pi}$ in high-dimensional state space can lead to limited exploration for specific tasks. Figure \ref{fig:kl_vs_w} shows the exploration performed by $\text{RAMP}_{\textcolor{lightblue}{\text{KL}}}$ (left) compared to $\text{RAMP}_{\textcolor{lightcoral}{\mathcal{W}}}$ (right) in the HalfCheetah environment. In this environment, the agent's state space has a dimensionality of 18, including information on the various robot limbs, so the classifier $f_{\phi}$ may focus on states related to joint configuration rather than the overall position. We note this in the left image, where the reward estimated by the classifier for $\text{RAMP}_{\textcolor{lightblue}{\text{KL}}}$ does not sufficiently encourage the agent to explore new $YZ$-coordinates. In the right image, the reward model $f_{\phi}$ for $\text{RAMP}_{\textcolor{lightcoral}{\mathcal{W}}}$ attributes higher reward for states where the robot's center is further away from the origin. The use of the Wasserstein distance, instead of the KL divergence, encourages meaningful exploration in this case.


Figure \ref{fig:ant_evolution} demonstrates that the exploration of $\text{RAMP}_{\textcolor{lightcoral}{\mathcal{W}}}$ is effective even in higher-dimensional spaces.
This figure depicts a timeline representing the $XY$-coordinates of the torso of the Ant robot at different points in training.
This version of the Ant robot simulator has 113 variables in each state, including nonessential information like the forces applied to various joints.
The agent quickly learns to move away from the initial distribution, which has a mean located at $(X=0,Y=0)$.
The agent then learns to distance itself from its past experiences by circling the environment, ultimately resulting in a uniform distribution over the $XY$ coordinates.
Despite the many possible combinations of variables in the high-dimensional state of the Ant robot, $\text{RAMP}_{\textcolor{lightcoral}{\mathcal{W}}}$ is able to reward exploration that creates new movements of behavior, beyond simple exploration of new states.

\begin{figure}[h]
    \centering
    \textbf{$\text{RAMP}_{\textcolor{lightcoral}{\mathcal{W}}}$ exploration in Ant} 
    
    \begin{subfigure}{0.30\textwidth}
        \centering
        \textbf{$T=6\times10^{5}$}
        \includegraphics[width=\textwidth]{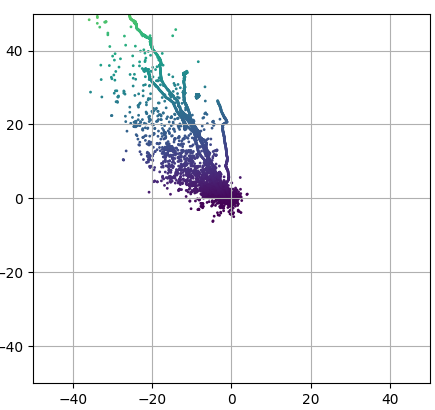}
    \end{subfigure}
    \hfill
    \begin{subfigure}{0.30\textwidth}
        \centering
        \textbf{$T=8\times10^{5}$} 
        \includegraphics[width=\textwidth]{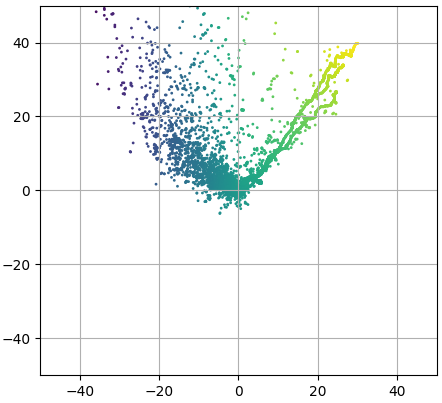}
    \end{subfigure}
    \hfill
    \begin{subfigure}{0.30\textwidth}
        \centering
        \textbf{$T=2\times10^{6}$} 
        \includegraphics[width=\textwidth]{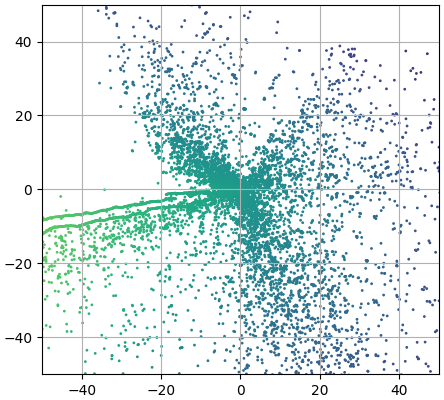}
    \end{subfigure}
    
    \caption{$XY$-coordinates of the Ant's torso at different timesteps $T$ of training. Color indicates the density used as reward model for $\text{RAMP}_{\textcolor{lightcoral}{\mathcal{W}}}$.} 
    \label{fig:ant_evolution} 
\end{figure}

\subsection{Quantifying the capacity to explore}

We now compare the two versions of RAMP ($\text{RAMP}_{\textcolor{lightblue}{\text{KL}}}$ and $\text{RAMP}_{\textcolor{lightcoral}{\mathcal{W}}}$) with 10 baselines which represent different approaches to exploration. We aim to understand whether the objective of RAMP leads to effective exploration, compared to similar and contemporary methods of exploration, and how the two methods of reward estimation in RAMP compare across environments.

We compare with state-of-the-art baselines which encourage exploration through the use of uncertainty and information theory. For approaches based on \textbf{uncertainty}, we compare with
AUX \citep{jaderberg2016reinforcement}, ICM, \citep{pathak2017curiosity}, RND \citep{burda2018exploration}, and NGU \citep{badia2020never}.
For approaches based on \textbf{information theory}, we compare with APT \citep{liu2021behavior}, DIAYN \citep{eysenbach2018diversity}, SMM \citep{lee2019efficient}, LSD \citep{park2022lipschitz}, METRA \citep{park2023metra}. Soft Actor Critic
(SAC) \citep{haarnoja2018soft} is used as the default reward maximizer for each of these methods and is also included in the comparison. For skill-based algorithms, such as DIAYN, we use a fixed number of 4 skills for all environments.

For each method, we calculate the state space coverage after a fixed number of environment steps. State space coverage is computed by discretizing the agent's state space in a low-dimensional space, which is specific to each environment, and which is initialized with zero values. During training, each time a new state is encountered, the corresponding matrix index is updated to one. The coverage corresponds to the percentage of the matrix that is filled. This way of quantifying exploration through state space coverage has wide use in the domain of Quality Diversity algorithms \citep{pugh2016quality}. The coverage is assessed on the $xy$-coordinates for the mazes, the $xyz$-coordinates for the locomotion tasks and the Fetch Reach environment, and the $xy$-coordinates of the objects for Fetch Slide and Fetch Push. For each of the baselines, the final coverage is evaluated after $5\times10^{5}$ steps for the mazes, $1\times10^{6}$ steps for the robotics tasks, and $8\times10^{6}$ steps for the locomotion tasks. The results for the locomotion tasks are shown in Table \ref{tab:coverage_locomotion}. Each column of the table indicates the relative mean coverage obtained by each method, divided by the coverage achieved by the best runs across all baselines for that environment.
 Complementary results of state space coverage on the maze and robotics tasks are detailed in Appendix~\ref{ap:ad_xp}.

\begin{table}[h]
    \centering 
\begin{tabular}{llllll}
\hline
 Algorithm    & Ant                   & HalfCheetah          & Hopper                & Humanoid            & Walker2d              \\
\hline
 APT         & 7.68 ± 0.9             & \textbf{93.39 ± 3.39} & 55.52 ± 3.75           & 54.73 ± 2.97         & 55.37 ± 2.83           \\
 AUX         & 4.53 ± 0.03            & 18.87 ± 0.21          & 5.65 ± 0.16            & 58.2 ± 0.24          & 11.65 ± 0.5            \\
 DIAYN       & 11.76 ± 0.61           & 58.18 ± 5.65          & 15.03 ± 8.89           & 70.68 ± 4.11         & 14.84 ± 1.34           \\
 ICM         & 3.26 ± 0.13            & 28.9 ± 1.55           & 41.63 ± 0.56           & 58.96 ± 0.87         & 33.81 ± 1.95           \\
 LSD         & 7.01 ± 2.15            & 30.43 ± 2.79          & 18.38 ± 5.27           & 69.89 ± 2.25         & 17.26 ± 2.32           \\
 METRA       & 23.46 ± 0.74           & 73.82 ± 3.0           & 37.5 ± 3.33            & 88.35 ± 5.05         & 36.88 ± 4.18           \\
 NGU         & 2.79 ± 0.07            & 25.53 ± 0.42          & 19.55 ± 0.62           & 44.02 ± 4.3          & 27.26 ± 2.01           \\
 RND         & 4.57 ± 0.07            & 19.1 ± 0.14           & 6.95 ± 0.51            & 57.91 ± 0.2          & 13.19 ± 0.31           \\
 SAC & 4.4 ± 0.05             & 18.42 ± 0.24          & 5.83 ± 0.15            & 57.94 ± 0.27         & 13.11 ± 0.92           \\
 SMM         & 10.61 ± 1.28           & 58.91 ± 5.2           & 43.41 ± 14.93          & 32.64 ± 3.09         & 44.21 ± 13.19          \\
 \hline
 
 $\text{RAMP}_{\textcolor{lightblue}{\text{KL}}}$     & 1.2 ± 0.05             & 29.76 ± 1.12          & 8.6 ± 0.55             & 30.53 ± 1.54         & 17.52 ± 2.04           \\
 $\text{RAMP}_{\textcolor{lightcoral}{\mathcal{W}}}$      & \textbf{78.35 ± 13.45} & 40.33 ± 6.69          & \textbf{74.43 ± 12.14} & \textbf{90.5 ± 2.29} & \textbf{74.89 ± 11.71} \\
\hline
\end{tabular}
\caption{Final relative mean coverage for the robot locomotion tasks. Bold indicates the highest mean per environment.}
    \label{tab:coverage_locomotion}
\end{table}

$\text{RAMP}_{\textcolor{lightcoral}{\mathcal{W}}}$ outperforms all baseline methods in state space coverage across all but one locomotion task, HalfCheetah. The most notable performance is on the Ant environment, where $\text{RAMP}_{\textcolor{lightcoral}{\mathcal{W}}}$ achieves a final score that is six times higher than that of the second-best baseline. As postulated above, $\text{RAMP}_{\textcolor{lightblue}{\text{KL}}}$ does not perform effective exploration on these environments, given the high dimensionality of the state space. However, $\text{RAMP}_{\textcolor{lightblue}{\text{KL}}}$ does provide a good reward estimate for exploration in simpler environments, as shown on the mazes in Appendix~\ref{ap:ad_xp}, and when an extrinsic reward is provided, as discussed below.

The limitations of RAMP are demonstrated on the robotic control tasks and detailed in Appendix~\ref{ap:ad_xp}. While RAMP is able to explore effectively on robot locomotion tasks, the control tasks such as FetchPush and FetchSlide are challenging for RAMP. While RAMP achieves competitive results to contemporary methods such as DIAYN \citep{eysenbach2018diversity} and NGU \citep{badia2020never}, APT \citep{liu2021behavior} significantly outperforms RAMP. We posit that a different density estimator, such as the one used in APT, could improve RAMP's performance, and that the presence of extrinsic motivation on these tasks would further help in exploration.

\subsection{Exploring with extrinsic reward}

We next evaluate the capacity of RAMP to aid in the maximization of an extrinsic reward. For example, in the locomotion tasks, the robots are rewarded for moving away from their starting point. In this experiment, the reward maximized is a weighted sum of the intrinsic reward, from RAMP, and the extrinsic rewards provided by the environment. To enable the agent to focus more on the extrinsic reward as training progresses, the weight of the intrinsic reward decreases linearly over time, reaching zero halfway through the training process. In this experiment, the two versions of RAMP are compared with a subset of the above baseline methods; a comparison with skill-based algorithms goes beyond the scope of the study, as these methods are often used in a hierarchical setting. All methods are run for $4\times10^{4}$ timesteps in each environment, using the same extrinsic and intrinsic motivation weighting, where applicable.
The results are presented in Table \ref{tab:episodic_return_locomotion}, which displays the mean maximum score achieved by each policy across five independent trials.
 
\begin{table}[h]
    \centering 
\begin{tabular}{llllll}
\hline
 Algorithm    & Ant               & HalfCheetah         & Hopper            & Humanoid          & Walker2d           \\
\hline
 APT         & 1042 ± 69          & 10316 ± 138          & 3036 ± 442         & 3330 ± 739         & 2205 ± 313          \\
 AUX         & 5434 ± 263         & 11127 ± 243          & 2249 ± 465         & 3477 ± 706         & 4767 ± 91           \\
 ICM         & 4450 ± 402         & 11161 ± 163          & 3675 ± 126         & 3880 ± 491         & 5513 ± 191          \\
 NGU         & 975 ± 12           & 2976 ± 584           & 1360 ± 60          & 415 ± 93           & 1689 ± 96           \\
 RND         & 4427 ± 158         & 10901 ± 108          & 3084 ± 381         & 5103 ± 34          & 5723 ± 56           \\
 SAC & 4972 ± 95          & 12197 ± 79           & \textbf{3875 ± 40} & 5163 ± 70          & 5650 ± 108          \\
 \hline
 
 $\text{RAMP}_{\textcolor{lightblue}{\text{KL}}}$     & 4768 ± 381         & \textbf{13826 ± 361} & 3636 ± 206         & \textbf{5358 ± 49} & \textbf{5939 ± 524} \\
 $\text{RAMP}_{\textcolor{lightcoral}{\mathcal{W}}}$      & \textbf{7100 ± 47} & 12997 ± 987          & 1036 ± 67          & 5342 ± 74          & 5933 ± 174          \\
\hline
\end{tabular}
\caption{Cumulative episodic return for the robot locomotion tasks. Bold indicates the highest mean per environment.}
\label{tab:episodic_return_locomotion}
\vspace{-2em}
\end{table}

Both variations of RAMP demonstrate strong performance. As in pure exploration, the most significant outcome is observed in the Ant environment, where $\text{RAMP}_{\textcolor{lightcoral}{\mathcal{W}}}$ exceeds the second-best baseline by over 40\%. This result was unexpected, given that the exploration strategy employed by $\text{RAMP}_{\textcolor{lightcoral}{\mathcal{W}}}$ does not appear to be optimal for maximizing performance on locomotion tasks, which necessitate precise joint synchronization. We hypothesized that the success of $\text{RAMP}_{\textcolor{lightcoral}{\mathcal{W}}}$ can be attributed to the fact that, in the initial learning phase, the intrinsic reward may provide a more manageable optimization signal, enabling the agent to learn rapid movement, thus facilitating the subsequent optimization of the extrinsic reward, locomotion behaviors.

Interestingly, $\text{RAMP}_{\textcolor{lightblue}{\text{KL}}}$, slightly outperforms $\text{RAMP}_{\textcolor{lightcoral}{\mathcal{W}}}$ on average, with aggregate mean scores across the five environments of $33,527\times10^{3}$ and $32,408\times10^{3}$, respectively. We posit that $\text{RAMP}_{\textcolor{lightblue}{\text{KL}}}$ is able to effectively explore environments, even when the state space is large, if the exploration is conditioned towards novel behaviors. Given that the extrinsic reward signal encourages locomotion, the intrinsic motivation given by KL divergence will focus on the difference in state variables like the robot position, rather than simply iterating over various robot joint configurations. As $\text{RAMP}_{\textcolor{lightblue}{\text{KL}}}$ is both simpler to implement and more closely linked theoretically to the original goal of maximizing Shannon entropy over visited states, we consider that the KL divergence is a logical choice when exploration can be guided towards interesting behaviors. $\text{RAMP}_{\textcolor{lightcoral}{\mathcal{W}}}$, on the other hand, is able to explore effectively in the presence or absence of extrinsic reward.
\section{Conclusion}

This paper presents a novel exploration method in reinforcement learning aimed at maximizing the Shannon entropy of an agent's experience distribution. By reformulating this objective, we derive a new method, RAMP, that encourages the agent to explore by distancing itself from past experiences.

We investigate the use of KL divergence and Wasserstein distance to characterize the differences between the agent's current distribution and its past distribution. We characterize two versions of RAMP, ($\text{RAMP}_{\textcolor{lightcoral}{\mathcal{W}}}$  and $\text{RAMP}_{\textcolor{lightblue}{\text{KL}}}$), and compare them. We find that maximizing the Wasserstein distance between two distributions under the temporal distance intuitively results in a different behavior compared to maximizing the KL divergence. To maximize the Wassterstein distance, the agent must encounter states that are as far away as possible from the states already encountered. This leads to effective exploration and even the ability to reach high rewards.

Our evaluations reveal that, in locomotion tasks, $\text{RAMP}_{\textcolor{lightcoral}{\mathcal{W}}}$ enables highly efficient exploration by extensively exploring over the agent's position. In the presence of extrinsic reward, $\text{RAMP}_{\textcolor{lightblue}{\text{KL}}}$ also demonstrates the ability to explore sufficiently to lead to rewarding behaviors. Beyond the use of an extrinsic reward, $\text{RAMP}_{\textcolor{lightblue}{\text{KL}}}$ could also be performed by measuring the KL divergence of a featurization of the state space, rather than the full state. This could lead to exploration of new behaviors, as demonstrated under the presence of an extrinsic reward. We aim to study such a featurization for high-dimensional state spaces in future work.

In summary, this work reframes exploration in reinforcement learning as an iterative process of distinguishing the agent’s present behavior from its past experiences, introducing a novel approach with wide applicability. The RAMP algorithm offers a new framing of exploration that could be combined with other exploration strategies, such as skill-based or hierarchical exploration. This new perspective on exploration, driven by maximizing the divergence between successive distributions, has the potential to advance both theoretical insights and practical algorithms for exploration in reinforcement learning.

\bibliography{arxiv}

\begin{thebibliography}{39}
\providecommand{\natexlab}[1]{#1}
\providecommand{\url}[1]{\texttt{#1}}
\expandafter\ifx\csname urlstyle\endcsname\relax
  \providecommand{\doi}[1]{doi: #1}\else
  \providecommand{\doi}{doi: \begingroup \urlstyle{rm}\Url}\fi

\bibitem[Ahmed et~al.(2019)Ahmed, Le~Roux, Norouzi, and Schuurmans]{ahmed2019understanding}
Zafarali Ahmed, Nicolas Le~Roux, Mohammad Norouzi, and Dale Schuurmans.
\newblock Understanding the impact of entropy on policy optimization.
\newblock In \emph{International conference on machine learning}, pp.\  151--160. PMLR, 2019.

\bibitem[Auer \& Ortner(2006)Auer and Ortner]{auer2006logarithmic}
Peter Auer and Ronald Ortner.
\newblock Logarithmic online regret bounds for undiscounted reinforcement learning.
\newblock \emph{Advances in neural information processing systems}, 19, 2006.

\bibitem[Badia et~al.(2020)Badia, Sprechmann, Vitvitskyi, Guo, Piot, Kapturowski, Tieleman, Arjovsky, Pritzel, Bolt, et~al.]{badia2020never}
Adri{\`a}~Puigdom{\`e}nech Badia, Pablo Sprechmann, Alex Vitvitskyi, Daniel Guo, Bilal Piot, Steven Kapturowski, Olivier Tieleman, Mart{\'\i}n Arjovsky, Alexander Pritzel, Andew Bolt, et~al.
\newblock Never give up: Learning directed exploration strategies.
\newblock \emph{arXiv preprint arXiv:2002.06038}, 2020.

\bibitem[Bellemare et~al.(2016)Bellemare, Srinivasan, Ostrovski, Schaul, Saxton, and Munos]{bellemare2016unifying}
Marc Bellemare, Sriram Srinivasan, Georg Ostrovski, Tom Schaul, David Saxton, and Remi Munos.
\newblock Unifying count-based exploration and intrinsic motivation.
\newblock \emph{Advances in neural information processing systems}, 29, 2016.

\bibitem[Bourel et~al.(2020)Bourel, Maillard, and Talebi]{bourel2020tightening}
Hippolyte Bourel, Odalric Maillard, and Mohammad~Sadegh Talebi.
\newblock Tightening exploration in upper confidence reinforcement learning.
\newblock In \emph{International Conference on Machine Learning}, pp.\  1056--1066. PMLR, 2020.

\bibitem[Brafman \& Tennenholtz(2002)Brafman and Tennenholtz]{brafman2002r}
Ronen~I Brafman and Moshe Tennenholtz.
\newblock R-max-a general polynomial time algorithm for near-optimal reinforcement learning.
\newblock \emph{Journal of Machine Learning Research}, 3\penalty0 (Oct):\penalty0 213--231, 2002.

\bibitem[Brockman et~al.(2016)Brockman, Cheung, Pettersson, Schneider, Schulman, Tang, and Zaremba]{1606.01540}
Greg Brockman, Vicki Cheung, Ludwig Pettersson, Jonas Schneider, John Schulman, Jie Tang, and Wojciech Zaremba.
\newblock Openai gym, 2016.

\bibitem[Burda et~al.(2018)Burda, Edwards, Storkey, and Klimov]{burda2018exploration}
Yuri Burda, Harrison Edwards, Amos Storkey, and Oleg Klimov.
\newblock Exploration by random network distillation.
\newblock \emph{arXiv preprint arXiv:1810.12894}, 2018.

\bibitem[Campos et~al.(2020)Campos, Trott, Xiong, Socher, Gir{\'o}-i Nieto, and Torres]{campos2020explore}
V{\'\i}ctor Campos, Alexander Trott, Caiming Xiong, Richard Socher, Xavier Gir{\'o}-i Nieto, and Jordi Torres.
\newblock Explore, discover and learn: Unsupervised discovery of state-covering skills.
\newblock In \emph{International Conference on Machine Learning}, pp.\  1317--1327. PMLR, 2020.

\bibitem[de~Lazcano et~al.(2023)de~Lazcano, Andreas, Tai, Lee, and Terry]{gymnasium_robotics2023github}
Rodrigo de~Lazcano, Kallinteris Andreas, Jun~Jet Tai, Seungjae~Ryan Lee, and Jordan Terry.
\newblock Gymnasium robotics, 2023.
\newblock URL \url{http://github.com/Farama-Foundation/Gymnasium-Robotics}.

\bibitem[Durugkar et~al.(2021)Durugkar, Tec, Niekum, and Stone]{durugkar2021adversarial}
Ishan Durugkar, Mauricio Tec, Scott Niekum, and Peter Stone.
\newblock Adversarial intrinsic motivation for reinforcement learning.
\newblock \emph{Advances in Neural Information Processing Systems}, 34:\penalty0 8622--8636, 2021.

\bibitem[Eysenbach et~al.(2018)Eysenbach, Gupta, Ibarz, and Levine]{eysenbach2018diversity}
Benjamin Eysenbach, Abhishek Gupta, Julian Ibarz, and Sergey Levine.
\newblock Diversity is all you need: Learning skills without a reward function.
\newblock \emph{arXiv preprint arXiv:1802.06070}, 2018.

\bibitem[Eysenbach et~al.(2020)Eysenbach, Salakhutdinov, and Levine]{eysenbach2020c}
Benjamin Eysenbach, Ruslan Salakhutdinov, and Sergey Levine.
\newblock C-learning: Learning to achieve goals via recursive classification.
\newblock \emph{arXiv preprint arXiv:2011.08909}, 2020.

\bibitem[Geist et~al.(2019)Geist, Scherrer, and Pietquin]{geist2019theory}
Matthieu Geist, Bruno Scherrer, and Olivier Pietquin.
\newblock A theory of regularized markov decision processes.
\newblock In \emph{International Conference on Machine Learning}, pp.\  2160--2169. PMLR, 2019.

\bibitem[Gregor et~al.(2016)Gregor, Rezende, and Wierstra]{gregor2016variational}
Karol Gregor, Danilo~Jimenez Rezende, and Daan Wierstra.
\newblock Variational intrinsic control.
\newblock \emph{arXiv preprint arXiv:1611.07507}, 2016.

\bibitem[Haarnoja et~al.(2018)Haarnoja, Zhou, Abbeel, and Levine]{haarnoja2018soft}
Tuomas Haarnoja, Aurick Zhou, Pieter Abbeel, and Sergey Levine.
\newblock Soft actor-critic: Off-policy maximum entropy deep reinforcement learning with a stochastic actor.
\newblock In \emph{International conference on machine learning}, pp.\  1861--1870. PMLR, 2018.

\bibitem[Hartikainen et~al.(2019)Hartikainen, Geng, Haarnoja, and Levine]{hartikainen2019dynamical}
Kristian Hartikainen, Xinyang Geng, Tuomas Haarnoja, and Sergey Levine.
\newblock Dynamical distance learning for semi-supervised and unsupervised skill discovery.
\newblock \emph{arXiv preprint arXiv:1907.08225}, 2019.

\bibitem[Hazan et~al.(2019)Hazan, Kakade, Singh, and Van~Soest]{hazan2019provably}
Elad Hazan, Sham Kakade, Karan Singh, and Abby Van~Soest.
\newblock Provably efficient maximum entropy exploration.
\newblock In \emph{International Conference on Machine Learning}, pp.\  2681--2691. PMLR, 2019.

\bibitem[Jaderberg et~al.(2016)Jaderberg, Mnih, Czarnecki, Schaul, Leibo, Silver, and Kavukcuoglu]{jaderberg2016reinforcement}
Max Jaderberg, Volodymyr Mnih, Wojciech~Marian Czarnecki, Tom Schaul, Joel~Z Leibo, David Silver, and Koray Kavukcuoglu.
\newblock Reinforcement learning with unsupervised auxiliary tasks.
\newblock \emph{arXiv preprint arXiv:1611.05397}, 2016.

\bibitem[Jaksch et~al.(2010)Jaksch, Ortner, and Auer]{jaksch2010near}
Thomas Jaksch, Ronald Ortner, and Peter Auer.
\newblock Near-optimal regret bounds for reinforcement learning.
\newblock \emph{Journal of Machine Learning Research}, 11:\penalty0 1563--1600, 2010.

\bibitem[Jarrett et~al.(2022)Jarrett, Tallec, Altch{\'e}, Mesnard, Munos, and Valko]{jarrett2022curiosity}
Daniel Jarrett, Corentin Tallec, Florent Altch{\'e}, Thomas Mesnard, R{\'e}mi Munos, and Michal Valko.
\newblock Curiosity in hindsight.
\newblock \emph{arXiv preprint arXiv:2211.10515}, 2022.

\bibitem[Kaelbling(1993)]{kaelbling1993learning}
Leslie~Pack Kaelbling.
\newblock Learning to achieve goals.
\newblock In \emph{IJCAI}, volume~2, pp.\  1094--8. Citeseer, 1993.

\bibitem[Kamienny et~al.(2021)Kamienny, Tarbouriech, Lamprier, Lazaric, and Denoyer]{kamienny2021direct}
Pierre-Alexandre Kamienny, Jean Tarbouriech, Sylvain Lamprier, Alessandro Lazaric, and Ludovic Denoyer.
\newblock Direct then diffuse: Incremental unsupervised skill discovery for state covering and goal reaching.
\newblock \emph{arXiv preprint arXiv:2110.14457}, 2021.

\bibitem[Kearns \& Singh(2002)Kearns and Singh]{kearns2002near}
Michael Kearns and Satinder Singh.
\newblock Near-optimal reinforcement learning in polynomial time.
\newblock \emph{Machine learning}, 49:\penalty0 209--232, 2002.

\bibitem[Lee et~al.(2019)Lee, Eysenbach, Parisotto, Xing, Levine, and Salakhutdinov]{lee2019efficient}
Lisa Lee, Benjamin Eysenbach, Emilio Parisotto, Eric Xing, Sergey Levine, and Ruslan Salakhutdinov.
\newblock Efficient exploration via state marginal matching.
\newblock \emph{arXiv preprint arXiv:1906.05274}, 2019.

\bibitem[Liu \& Abbeel(2021)Liu and Abbeel]{liu2021behavior}
Hao Liu and Pieter Abbeel.
\newblock Behavior from the void: Unsupervised active pre-training.
\newblock \emph{Advances in Neural Information Processing Systems}, 34:\penalty0 18459--18473, 2021.

\bibitem[Mnih et~al.(2013)Mnih, Kavukcuoglu, Silver, Graves, Antonoglou, Wierstra, and Riedmiller]{mnih2013playing}
Volodymyr Mnih, Koray Kavukcuoglu, David Silver, Alex Graves, Ioannis Antonoglou, Daan Wierstra, and Martin Riedmiller.
\newblock Playing atari with deep reinforcement learning.
\newblock \emph{arXiv preprint arXiv:1312.5602}, 2013.

\bibitem[Munos et~al.(2014)]{munos2014bandits}
R{\'e}mi Munos et~al.
\newblock From bandits to monte-carlo tree search: The optimistic principle applied to optimization and planning.
\newblock \emph{Foundations and Trends{\textregistered} in Machine Learning}, 7\penalty0 (1):\penalty0 1--129, 2014.

\bibitem[Park et~al.(2022)Park, Choi, Kim, Lee, and Kim]{park2022lipschitz}
Seohong Park, Jongwook Choi, Jaekyeom Kim, Honglak Lee, and Gunhee Kim.
\newblock Lipschitz-constrained unsupervised skill discovery.
\newblock In \emph{International Conference on Learning Representations}, 2022.

\bibitem[Park et~al.(2023{\natexlab{a}})Park, Lee, Lee, and Abbeel]{park2023controllability}
Seohong Park, Kimin Lee, Youngwoon Lee, and Pieter Abbeel.
\newblock Controllability-aware unsupervised skill discovery.
\newblock \emph{arXiv preprint arXiv:2302.05103}, 2023{\natexlab{a}}.

\bibitem[Park et~al.(2023{\natexlab{b}})Park, Rybkin, and Levine]{park2023metra}
Seohong Park, Oleh Rybkin, and Sergey Levine.
\newblock Metra: Scalable unsupervised rl with metric-aware abstraction.
\newblock \emph{arXiv preprint arXiv:2310.08887}, 2023{\natexlab{b}}.

\bibitem[Pathak et~al.(2017)Pathak, Agrawal, Efros, and Darrell]{pathak2017curiosity}
Deepak Pathak, Pulkit Agrawal, Alexei~A Efros, and Trevor Darrell.
\newblock Curiosity-driven exploration by self-supervised prediction.
\newblock In \emph{International conference on machine learning}, pp.\  2778--2787. PMLR, 2017.

\bibitem[Pugh et~al.(2016)Pugh, Soros, and Stanley]{pugh2016quality}
Justin~K Pugh, Lisa~B Soros, and Kenneth~O Stanley.
\newblock Quality diversity: A new frontier for evolutionary computation.
\newblock \emph{Frontiers in Robotics and AI}, 3:\penalty0 202845, 2016.

\bibitem[Puterman(2014)]{puterman2014markov}
Martin~L Puterman.
\newblock \emph{Markov decision processes: discrete stochastic dynamic programming}.
\newblock John Wiley \& Sons, 2014.

\bibitem[Schulman et~al.(2017)Schulman, Wolski, Dhariwal, Radford, and Klimov]{schulman2017proximal}
John Schulman, Filip Wolski, Prafulla Dhariwal, Alec Radford, and Oleg Klimov.
\newblock Proximal policy optimization algorithms.
\newblock \emph{arXiv preprint arXiv:1707.06347}, 2017.

\bibitem[Sharma et~al.(2019)Sharma, Gu, Levine, Kumar, and Hausman]{sharma2019dynamics}
Archit Sharma, Shixiang Gu, Sergey Levine, Vikash Kumar, and Karol Hausman.
\newblock Dynamics-aware unsupervised discovery of skills.
\newblock \emph{arXiv preprint arXiv:1907.01657}, 2019.

\bibitem[Shelhamer et~al.(2016)Shelhamer, Mahmoudieh, Argus, and Darrell]{shelhamer2016loss}
Evan Shelhamer, Parsa Mahmoudieh, Max Argus, and Trevor Darrell.
\newblock Loss is its own reward: Self-supervision for reinforcement learning.
\newblock \emph{arXiv preprint arXiv:1612.07307}, 2016.

\bibitem[Todorov et~al.(2012)Todorov, Erez, and Tassa]{todorov2012mujoco}
Emanuel Todorov, Tom Erez, and Yuval Tassa.
\newblock Mujoco: A physics engine for model-based control.
\newblock In \emph{2012 IEEE/RSJ international conference on intelligent robots and systems}, pp.\  5026--5033. IEEE, 2012.

\bibitem[Villani et~al.(2009)]{villani2009optimal}
C{\'e}dric Villani et~al.
\newblock \emph{Optimal transport: old and new}, volume 338.
\newblock Springer, 2009.

\end{thebibliography}
\bibliographystyle{arxiv}

\appendix

\section{Additional experiments}
\label{ap:ad_xp}

\subsection{Coverage}
Table \ref{tab:coverage_maze} contains the results of the relative coverage for the point maze environments in Figure \ref{fig:combined_tasks}.
\begin{table}[h]
    \centering 
\begin{tabular}{llll}
\hline
 Algorithm    & Easy-maze           & Hard-maze            & U-maze             \\
\hline
 APT         & 98.39 ± 0.31         & \textbf{77.54 ± 0.86} & 89.47 ± 1.8          \\
 AUX         & 26.52 ± 0.32         & 26.96 ± 0.2           & 26.53 ± 0.16         \\
 DIAYN       & 54.5 ± 2.86          & 40.5 ± 1.93           & 45.37 ± 6.42         \\
 ICM         & 83.83 ± 2.82         & 66.45 ± 11.48         & 67.15 ± 6.87         \\
 LSD         & 74.93 ± 2.13         & 46.01 ± 1.99          & 52.24 ± 2.46         \\
 METRA       & 53.47 ± 4.99         & 38.71 ± 5.74          & 51.09 ± 3.57         \\
 NGU         & 58.95 ± 1.88         & 42.32 ± 4.41          & 44.01 ± 5.36         \\
 RND         & 27.07 ± 0.22         & 27.26 ± 0.11          & 26.9 ± 0.17          \\
 SAC & 27.41 ± 0.23         & 27.65 ± 0.25          & 27.06 ± 0.28         \\
 SMM         & 56.3 ± 3.17          & 42.94 ± 1.25          & 48.15 ± 5.53         \\
 \hline
 $\text{RAMP}_{\textcolor{lightblue}{\text{KL}}}$     & 98.51 ± 0.44         & 71.23 ± 4.08          & 79.56 ± 2.24         \\
 $\text{RAMP}_{\textcolor{lightcoral}{\mathcal{W}}}$      & \textbf{100.0 ± 0.0} & 70.47 ± 0.6           & \textbf{89.5 ± 3.14} \\
\hline
\end{tabular}
\caption{Final coverage in mazes}
    \label{tab:coverage_maze}
\end{table}
$\text{RAMP}_{\textcolor{lightblue}{\text{KL}}}$ and $\text{RAMP}_{\textcolor{lightcoral}{\mathcal{W}}}$ demonstrate effective performance across all maze environments, with $\text{RAMP}_{\textcolor{lightcoral}{\mathcal{W}}}$ achieving the highest coverage in two specific tasks. Although $\text{RAMP}_{\textcolor{lightcoral}{\mathcal{W}}}$ exhibits slightly better performance than $\text{RAMP}_{\textcolor{lightblue}{\text{KL}}}$ in these environments, the difference is not substantial.

The limitations of RAMP become apparent in robotic tasks, as illustrated by the coverage results presented in Table \ref{tab:coverage_robotics}. In this context, neither of the RAMP methods attains the highest coverage in one of the tasks. While $\text{RAMP}_{\textcolor{lightcoral}{\mathcal{W}}}$ achieves the second highest coverage in the FetchReach environment, it is notably outperformed by APT \citep{liu2021behavior} in environments where coverage is assessed on the object to be manipulated. This disparity arises because the intrinsic reward mechanism derived from the state representation employed by APT provides a highly effective learning signal, enabling the agent to achieve significantly competitive coverage in a highly sample-efficient manner. Interestingly, APT significantly outperform $\text{RAMP}_{\textcolor{lightblue}{\text{KL}}}$ while these two methods essentially maximize the same objective in different ways and using different density estimators. This further emphasizes the importance of the density estimator used to address a given task. 
\begin{table}[H]
    \centering 
\begin{tabular}{llll}
\hline
 Algorithm    & FetchPush            & FetchReach           & FetchSlide          \\
\hline
 APT         & \textbf{90.87 ± 3.59} & 59.37 ± 1.92          & \textbf{95.43 ± 4.5} \\
 AUX         & 20.27 ± 0.91          & 19.5 ± 0.09           & 46.59 ± 2.55         \\
 DIAYN       & 18.12 ± 0.54          & 33.98 ± 2.24          & 45.05 ± 3.68         \\
 ICM         & 36.32 ± 4.24          & 77.51 ± 1.16          & 42.68 ± 2.21         \\
 LSD         & 36.24 ± 6.01          & \textbf{98.08 ± 0.72} & 47.49 ± 1.75         \\
 METRA       & 10.99 ± 0.43          & 68.05 ± 1.38          & 17.01 ± 1.47         \\
 NGU         & 23.85 ± 0.94          & 62.22 ± 1.22          & 44.91 ± 1.99         \\
 RND         & 21.86 ± 0.32          & 19.49 ± 0.16          & 54.66 ± 2.57         \\
 SAC & 21.24 ± 0.73          & 19.41 ± 0.1           & 47.14 ± 2.7          \\
 SMM         & 69.06 ± 5.98          & 29.37 ± 2.29          & 64.96 ± 1.26         \\
 \hline
 $\text{RAMP}_{\textcolor{lightblue}{\text{KL}}}$     & 23.61 ± 3.15          & 58.57 ± 2.5           & 35.55 ± 2.83         \\
 $\text{RAMP}_{\textcolor{lightcoral}{\mathcal{W}}}$      & 34.2 ± 6.18           & 81.56 ± 1.42          & 51.11 ± 2.9          \\
\hline
\end{tabular}
\caption{Final coverage in robotic tasks}
    \label{tab:coverage_robotics}
\end{table}

\subsection{Episodic return}
\label{ap:episodic}
Table \ref{tab:episodic_return_maze} depicts the mean maximum score reached by the best policy obtained using the various algorithms. 
\begin{table}[H]
    \centering 
\begin{tabular}{llll}
\hline
 Algorithm    & Easy-maze          & Hard-maze           & U-maze           \\
\hline
 APT         & \textbf{1.27 ± 0.0} & 0.56 ± 0.05          & \textbf{1.0 ± 0.0} \\
 AUX         & \textbf{1.27 ± 0.0} & 0.51 ± 0.0           & 0.5 ± 0.0          \\
 ICM         & \textbf{1.27 ± 0.0} & 0.61 ± 0.1           & 0.8 ± 0.12         \\
 RND         & \textbf{1.27 ± 0.0} & 0.51 ± 0.0           & 0.9 ± 0.1          \\
 SAC & 1.03 ± 0.01         & 0.51 ± 0.0           & 0.5 ± 0.0          \\
 \hline
 $\text{RAMP}_{\textcolor{lightblue}{\text{KL}}}$      & 1.25 ± 0.02         & 0.51 ± 0.0           & 0.64 ± 0.1         \\
 $\text{RAMP}_{\textcolor{lightcoral}{\mathcal{W}}}$      & \textbf{1.27 ± 0.0} & \textbf{0.69 ± 0.15} & 0.9 ± 0.1          \\
\hline
\end{tabular}
\caption{Cumulative episodic return in mazes}
    \label{tab:episodic_return_maze}
\end{table}

The results were obtained after $2.5 \times 10^{5}$ steps in the environment. Once again, the exploration conducted by $\text{RAMP}_{\textcolor{lightcoral}{\mathcal{W}}}$ enables the agent to achieve the highest return in two mazes and the second highest return in the U-maze. While these environments are relatively simple, this experiment can be interpreted as a sanity check. Notably, in all environments, the baselines utilizing intrinsic rewards for exploration consistently achieved scores equal to or higher than those of Vanilla SAC.



\section{Proof of Theorem~\ref{thm:lower_bound_delta_n}}
\label{app:lower_bound_delta_n}

For convenience, we recall Theorem \ref{thm:lower_bound_delta_n}.

\begin{theorem*}
Given two successive epochs $n$ and $n+1$, the rate of increase $\Delta_n$ of the Shannon entropy of the $\mu_n$ distribution can be lower bounded as follows: 
$$
H_{n+1}-H_n \geq \beta\left(\text{D}_{\text{KL}}(\rho_{n+1}||\mu_{n+1}) + H_{\rho_{n+1}}[S] - H_n  \right)
$$
\end{theorem*}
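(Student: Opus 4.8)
The idea is to expand $H_{n+1}=H_{\mu_{n+1}}[S]$ by splitting the measure $\mu_{n+1}=\beta\rho_{n+1}+(1-\beta)\mu_n$ inside the entropy integral, while keeping the same $\log\mu_{n+1}$ factor. This turns $H_{n+1}$ into a convex combination of two cross-entropy terms, each of which I then rewrite as a KL divergence plus an entropy. Dropping one nonnegative KL term yields the bound.

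\textbf{Step 1 (mixture split).} Write
\[
H_{n+1} = -\int \mu_{n+1}(s)\log\mu_{n+1}(s)\,ds
       = -\beta\!\int \rho_{n+1}(s)\log\mu_{n+1}(s)\,ds \;-\;(1-\beta)\!\int \mu_n(s)\log\mu_{n+1}(s)\,ds,
\]
simply substituting $\mu_{n+1}=\beta\rho_{n+1}+(1-\beta)\mu_n$ in the first factor only.

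\textbf{Step 2 (cross-entropy identities).} Recognize the two integrals as cross-entropies and apply the identity $-\int p\log q = \text{D}_{\text{KL}}(p\|q) + H_p[S]$ with $(p,q)=(\rho_{n+1},\mu_{n+1})$ and $(p,q)=(\mu_n,\mu_{n+1})$ respectively. This gives
\[
H_{n+1} = \beta\bigl(\text{D}_{\text{KL}}(\rho_{n+1}\|\mu_{n+1}) + H_{\rho_{n+1}}[S]\bigr) + (1-\beta)\bigl(\text{D}_{\text{KL}}(\mu_n\|\mu_{n+1}) + H_n\bigr).
\]

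\textbf{Step 3 (subtract $H_n$ and discard).} Subtract $H_n$ from both sides; the $(1-\beta)H_n$ term combines with $-H_n$ to give $-\beta H_n$, so
\[
\Delta_{n+1} = \beta\bigl(\text{D}_{\text{KL}}(\rho_{n+1}\|\mu_{n+1}) + H_{\rho_{n+1}}[S] - H_n\bigr) + (1-\beta)\,\text{D}_{\text{KL}}(\mu_n\|\mu_{n+1}).
\]
Since $\beta\in(0,1)$ and $\text{D}_{\text{KL}}(\mu_n\|\mu_{n+1})\ge 0$ by Gibbs' inequality, the last term is nonnegative and can be dropped, yielding the claimed inequality.

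\textbf{Expected obstacle.} There is no real obstacle: the argument is a two-line manipulation once one spots the mixture split in Step 1. The only points needing minor care are (i) justifying the interchange/linearity of the integral in Step 1 (immediate for probability densities), (ii) the support condition $\supp\rho_{n+1}\cup\supp\mu_n\subseteq\supp\mu_{n+1}$ needed for the cross-entropy identities to be finite — which holds automatically since $\mu_{n+1}$ is a mixture with positive weights on both — and (iii) invoking nonnegativity of KL. The identity form (the equality in Step 3, before discarding) is in fact tighter than the stated bound and worth displaying, as it makes the slack $(1-\beta)\text{D}_{\text{KL}}(\mu_n\|\mu_{n+1})$ explicit.
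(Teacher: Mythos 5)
Your proof is correct and follows essentially the same route as the paper: split $\mu_{n+1}$ as the mixture $\beta\rho_{n+1}+(1-\beta)\mu_n$ inside the entropy integral, identify the two resulting cross-entropies as KL-plus-entropy terms, and discard the nonnegative slack $(1-\beta)\,\text{D}_{\text{KL}}(\mu_n\|\mu_{n+1})$ — the exact identity you display before dropping that term is precisely the one the paper derives.
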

\begin{proof}
By definition: 
\begin{equation*}
    H_{n+1}-H_n = \int_{\mathcal{S}}\mu_{n+1}(s)\log\left(\frac{1}{\mu_{n+1}(s)}\right) +  \mu_{n}(s)\log\left(\mu_{n}(s)\right)\text{d}s
\end{equation*}
By definition of $\mu_{n+1}$: 
\begin{align*}
&H_{n+1}-H_n= \\
&=  \int_{\mathcal{S}}\beta\rho_{n+1}(s)\log\left(\frac{1}{\mu_{n+1}(s)}\right) + (1-\beta)\mu_{n}(s)\log\left(\frac{1}{\mu_{n+1}(s)}\right)+  \mu_{n}(s)\log\left(\mu_{n}(s)\right)\text{d}s \\
&= \int_{\mathcal{S}}\beta\rho_{n+1}(s)\log\left(\frac{\rho_{n+1}(s)}{\mu_{n+1}(s)}\right) + (1-\beta)\mu_{n}(s)\log\left(\frac{\mu_{n}(s)}{\mu_{n+1}(s)}\right)+ \\
& \quad\quad\quad\quad \beta\mu_{n}(s)\log\left(\mu_{n}(s)\right) + \beta\rho_{n+1}(s)\log\left(\frac{1}{\rho_{n+1}(s)}\right)\text{d}s \\
&= \beta\left(\text{D}_{\text{KL}}(\rho_{n+1}||\mu_{n+1}) + H_{\rho_{n+1}}[S] - H_n  \right) + (1-\beta)\text{D}_{\text{KL}}(\mu_{n}||\mu_{n+1})
\end{align*}

KL is always a positive quantity: $\forall n \in \mathbb{N}^{*}$: $\text{D}_{\text{KL}}(\mu_{n}||\mu_{n+1}) \geq 0$. 
Therefore, the entropy's speed of increase can be lower bounded the following way: 
$$
H_{n+1}-H_n \geq \beta\left(\text{D}_{\text{KL}}(\rho_{n+1}||\mu_{n+1}) + H_{\rho_{n+1}}[S] - H_n  \right)
$$
This ends the proof.
\end{proof}

\underline{\textbf{Discussion:}} 
By definition $\text{D}_{\text{KL}}(\mu_{n}||\mu_{n+1})$ is evaluated on the support of $\mu_n$. whereas when maximizing the entropy's rate of increase one could expect a KL defined on $\rho_{n+1}$'s support (or $\mu_{n+1}$'s support) to be much more informative. This study focuses on the left term.



\section{Proof of Theorem~\ref{thm:kl_reward_model}}
\label{app:proof_kl}

For the sake of convenience, we recall Theorem \ref{thm:kl_reward_model}.

\begin{theorem*}

Given policy $\pi$, let $\varepsilon_1$ be the approximation error of $\hat{r}_{\text{D}_{\text{KL}}}$, i.e. $\|\hat{r}_{\text{D}_{\text{KL}}} - r^\pi_{\text{D}_{\text{KL}}}\|_{\infty} \leq \varepsilon_1$.\\
Let $\pi'$ be another policy and $\varepsilon_0 \in \mathbb{R}$ such that $\|\frac{\rho^{\pi'}}{\rho^{\pi}} - 1\|_{\infty} \geq \varepsilon_0$ ($\rho^{\pi'}$ is close to $\rho^\pi$).\\
Finally, let $\varepsilon_2$ measure how much $\pi'$ improves on $\pi$ for $\hat{r}_{\text{D}_{\text{KL}}}$: $\langle\rho^{\pi'}, \hat{r}_{\text{D}_{\text{KL}}}\rangle - \langle\rho^{\pi}, \hat{r}_{\text{D}_{\text{KL}}}\rangle = \varepsilon_{2}$.\\
If $\varepsilon_2\geq 2\varepsilon_1 - \log(1-\varepsilon_0)$, then
$\text{D}_{\text{KL}}\left(\rho^{\pi'}|| \rho^{\pi'}\beta + (1-\beta)\mu_n\right) \geq \text{D}_{\text{KL}}\left(\rho^{\pi}|| \rho^{\pi}\beta + (1-\beta)\mu_n\right).$
\end{theorem*}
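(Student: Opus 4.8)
The plan is to reduce everything to the functional $D(\pi) := \text{D}_{\text{KL}}(\rho^{\pi}\,||\,\beta\rho^{\pi}+(1-\beta)\mu_n)$. By the very definition of $r^{\pi}_{\text{D}_{\text{KL}}}$ we have $D(\pi)=\langle\rho^{\pi},r^{\pi}_{\text{D}_{\text{KL}}}\rangle$ and likewise $D(\pi')=\langle\rho^{\pi'},r^{\pi'}_{\text{D}_{\text{KL}}}\rangle$, so the target is exactly $\langle\rho^{\pi'},r^{\pi'}_{\text{D}_{\text{KL}}}\rangle\geq\langle\rho^{\pi},r^{\pi}_{\text{D}_{\text{KL}}}\rangle$. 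I would split the left-hand side as $\langle\rho^{\pi'},r^{\pi'}_{\text{D}_{\text{KL}}}\rangle = \langle\rho^{\pi'},r^{\pi}_{\text{D}_{\text{KL}}}\rangle + \big(\langle\rho^{\pi'},r^{\pi'}_{\text{D}_{\text{KL}}}\rangle-\langle\rho^{\pi'},r^{\pi}_{\text{D}_{\text{KL}}}\rangle\big)$, i.e. ``improvement against the frozen reward model $r^{\pi}_{\text{D}_{\text{KL}}}$'' plus ``correction for measuring $\pi'$ with the wrong reward model'', and bound each piece separately.

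For the first piece, since $\rho^{\pi}$ and $\rho^{\pi'}$ are probability densities, $|\langle\rho,g\rangle|\leq\|g\|_{\infty}$; applying this with $g=\hat{r}_{\text{D}_{\text{KL}}}-r^{\pi}_{\text{D}_{\text{KL}}}$ (so $\|g\|_{\infty}\leq\varepsilon_1$) to both $\rho^{\pi'}$ and $\rho^{\pi}$, and inserting the hypothesis $\langle\rho^{\pi'},\hat{r}_{\text{D}_{\text{KL}}}\rangle-\langle\rho^{\pi},\hat{r}_{\text{D}_{\text{KL}}}\rangle=\varepsilon_2$, a triangle inequality yields $\langle\rho^{\pi'},r^{\pi}_{\text{D}_{\text{KL}}}\rangle-\langle\rho^{\pi},r^{\pi}_{\text{D}_{\text{KL}}}\rangle\geq\varepsilon_2-2\varepsilon_1$.

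The second piece is the crux. Writing $p=\rho^{\pi}(s)$, $q=\rho^{\pi'}(s)$, $m=\mu_n(s)$, the pointwise integrand of $\langle\rho^{\pi'},r^{\pi'}_{\text{D}_{\text{KL}}}-r^{\pi}_{\text{D}_{\text{KL}}}\rangle$ (before weighting by $q$) is $\log\frac{q}{\beta q+(1-\beta)m}-\log\frac{p}{\beta p+(1-\beta)m}$. Setting $\delta=\delta(s)=q/p-1$ (so $\delta\geq-\varepsilon_0$ from the closeness hypothesis, read as $\rho^{\pi'}/\rho^{\pi}\geq 1-\varepsilon_0$ pointwise with $\varepsilon_0\in[0,1)$) and $c=c(s)=\frac{\beta p}{\beta p+(1-\beta)m}\in[0,1]$, direct algebra collapses this integrand to $\log(1+\delta)-\log(1+c\delta)$. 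I would then establish $\log(1+\delta)-\log(1+c\delta)\geq\log(1-\varepsilon_0)$ by a short case split: if $\delta\geq0$ then $c\delta\leq\delta$ forces the expression $\geq0\geq\log(1-\varepsilon_0)$; if $\delta<0$ then $c\delta\leq0$ forces $\log(1+c\delta)\leq0$, hence the expression is $\geq\log(1+\delta)\geq\log(1-\varepsilon_0)$. Integrating against the probability density $\rho^{\pi'}$ gives $\langle\rho^{\pi'},r^{\pi'}_{\text{D}_{\text{KL}}}\rangle-\langle\rho^{\pi'},r^{\pi}_{\text{D}_{\text{KL}}}\rangle\geq\log(1-\varepsilon_0)$.

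Combining the two bounds, $D(\pi')\geq\langle\rho^{\pi},r^{\pi}_{\text{D}_{\text{KL}}}\rangle+\varepsilon_2-2\varepsilon_1+\log(1-\varepsilon_0)=D(\pi)+\big(\varepsilon_2-2\varepsilon_1+\log(1-\varepsilon_0)\big)$, and the assumption $\varepsilon_2\geq2\varepsilon_1-\log(1-\varepsilon_0)$ makes the bracket nonnegative, which is exactly $D(\pi')\geq D(\pi)$. The main obstacle is the second piece: spotting the change of variables $(\delta,c)$ that turns the log-ratio of mixtures into $\log(1+\delta)-\log(1+c\delta)$, and then recognizing that, uniformly over $c\in[0,1]$ and $\delta\geq-\varepsilon_0$, this quantity is minimized at $\log(1-\varepsilon_0)$; everything else is routine $\|\cdot\|_{\infty}$ bookkeeping. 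A minor technical caveat is the well-definedness of $q/p$ on the set where $\rho^{\pi}$ vanishes, but that set is $\rho^{\pi'}$-negligible since $\delta$ is bounded, so it contributes nothing to the integral.
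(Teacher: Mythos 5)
Your proof is correct and follows essentially the same route as the paper's: both arguments reduce to the pointwise inequality $r^{\pi'}_{\text{D}_{\text{KL}}}(s) \geq r^{\pi}_{\text{D}_{\text{KL}}}(s) + \log(1-\varepsilon_0)$ (you via the $(\delta,c)$ parametrization and case split, the paper via monotonicity of $x \mapsto \log(x/(\beta x + (1-\beta)y))$ together with a denominator bound), and then perform the identical $2\varepsilon_1$ bookkeeping against $\hat{r}_{\text{D}_{\text{KL}}}$ and the definition of $\varepsilon_2$. Your reading of the (evidently mistyped) closeness hypothesis as $\rho^{\pi'}/\rho^{\pi} \geq 1-\varepsilon_0$ matches the paper's.
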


\begin{proof}

For the sake of notation simplicity, we write $\rho'=\rho^{\pi'}$ and $\rho=\rho^\pi$. We also introduce the function $f(x,y) = \log(x / (\beta x +(1-\beta)y ))$. Note that, for a fixed $y$, $f$ is monotonically increasing in $x$.

Finally, given any real-valued functions $\nu$ and $\mu$, we note $f(\nu,\mu)$ the function $s \mapsto f(\nu(s),\mu(s))$.

Then: 
\begin{align*}
\text{D}_{\text{KL}}\left(\rho'|| \rho'\beta + (1-\beta)\mu_n\right) &= \underset{s\sim \rho'}{\mathbb{E}} \left[\log\left(\frac{\rho'(s)}{\beta\rho'(s)+(1-\beta)\mu_n(s)}\right)\right],\\
&= \underset{s\sim \rho'}{\mathbb{E}} \left[ f(\rho',\mu_n)(s) \right],\\
&= \langle \rho', f(\rho',\mu_n) \rangle.
\end{align*}

Using the assumption that $\rho'$ is close to $\rho$, one has $1-\varepsilon_0 \leq \frac{\rho'(s)}{\rho(s)} \leq 1+\varepsilon_0$ for all $s$. 
Thus $\rho'(s) \geq \rho(s)(1-\varepsilon_0)$, and: 
\begin{align*}
\langle \rho', f(\rho',\mu_n) \rangle &\geq \langle \rho', f((1-\varepsilon_0)\rho,\mu_n),\\
&\geq \underset{s\sim \rho'}{\mathbb{E}} \left[\log\left(\frac{\rho(s)(1-\varepsilon_{0})}{\beta(\rho(s))(1-\varepsilon_0)+(1-\beta)\mu_n(s)}\right)\right],\\
&\geq \underset{s\sim \rho^{\pi'}}{\mathbb{E}} \left[\log\left(\frac{\rho(s)}{\beta(\rho(s))(1-\varepsilon_0)+(1-\beta)\mu_n(s)}\right)\right] + \log((1-\varepsilon_{0})).
\end{align*}
Additionally, $\log\left(\frac{\rho(s)}{\beta(\rho(s))(1-\varepsilon_0)+(1-\beta)\mu_n(s)}\right) \geq \log\left(\frac{\rho(s)}{\beta(\rho(s))+(1-\beta)\mu_n(s)}\right)$ and so: 
\begin{align*}
\langle\rho', f(\rho',\mu_n)\rangle &\geq \underset{s\sim \rho'}{\mathbb{E}} \left[\log\left(\frac{\rho(s)}{\beta(\rho(s))+(1-\beta)\mu_n(s)}\right)\right] + \log((1-\varepsilon_{0})), \\
&\geq \langle\rho', f(\rho,\mu_n)\rangle + \log((1-\varepsilon_{0})).
\end{align*}

Recall that 
$\|\hat{r}_{\text{D}_{\text{KL}}} - r^\pi_{\text{D}_{\text{KL}}}\|_{\infty} \leq \varepsilon_1$, that is $|\hat{r}_{\text{D}_{\text{KL}}}(s) - r^\pi_{\text{D}_{\text{KL}}}(s)| \leq \varepsilon_1$ for all $s$. Using the $f$ notation:
$|\hat{r}_{\text{D}_{\text{KL}}}(s) - f(\rho,\mu_n)(s)| \leq \varepsilon_1$. And hence 
$f(\rho,\mu_n)(s) \geq \hat{r}_{\text{D}_{\text{KL}}}(s) - \varepsilon_1$ for all $s$.
Consequently:
$$\langle \rho', f(\rho,\mu_n)\rangle \geq \langle \rho', \hat{r}_{\text{D}_{\text{KL}}}\rangle - \varepsilon_1.$$
And so:
$$\langle \rho', f(\rho',\mu_n)\rangle \geq \langle \rho', \hat{r}_{\text{D}_{\text{KL}}}\rangle + \log(1-\varepsilon_0) - \varepsilon_1.$$

By definition of $\varepsilon_2$, one has $\langle\rho', \hat{r}_{\text{D}_{\text{KL}}}\rangle - \langle\rho, \hat{r}_{\text{D}_{\text{KL}}}\rangle = \varepsilon_{2}$. So:
$$\langle \rho', f(\rho',\mu_n)\rangle \geq \langle \rho, \hat{r}_{\text{D}_{\text{KL}}}\rangle + \log(1-\varepsilon_0) - \varepsilon_1 + \varepsilon_2.$$

As previously, since $\|\hat{r}_{\text{D}_{\text{KL}}} - r^\pi_{\text{D}_{\text{KL}}}\|_{\infty} \leq \varepsilon_1$, one has $|\hat{r}_{\text{D}_{\text{KL}}}(s) - f(\rho,\mu_n)(s)| \leq \varepsilon_1$. And hence $\hat{r}_{\text{D}_{\text{KL}}}(s) \geq f(\rho,\mu_n)(s) - \varepsilon_1$. Therefore:
$$\langle \rho, \hat{r}_{\text{D}_{\text{KL}}}\rangle \geq \langle \rho, f(\rho,\mu_n)\rangle \geq  - \varepsilon_1.$$

Putting it all together, one obtains:
$$\langle \rho', f(\rho',\mu_n)\rangle \geq \langle \rho, f(\rho,\mu_n)\rangle + \log(1-\varepsilon_0) - 2\varepsilon_1 + \varepsilon_2.$$
We identify $\text{D}_{\text{KL}}\left(\rho'|| \rho'\beta + (1-\beta)\mu_n\right)$ in the first term and $\text{D}_{\text{KL}}\left(\rho|| \rho\beta + (1-\beta)\mu_n\right)$ in the second one.

Finally, if $\log(1-\varepsilon_0) - 2\varepsilon_1 + \varepsilon_2 \geq 0$ we guarantee that the first divergence is larger than the second, which concludes the proof.
\end{proof}


\section{Proof of Theorem~\ref{thm:w_reward_model}}
\label{app:proof_w}

For convenience, we recall Theorem \ref{thm:w_reward_model}.

\begin{theorem*}

Given policy $\pi$, let $\varepsilon_1$ be the approximation error of $\hat{r}_{\mathcal{W}}$, i.e. $\|\hat{r}_{\mathcal{W}} - r^\pi_{\mathcal{W}}\|_{\infty} \leq \varepsilon_1$.\\
Let $\pi'$ be another policy and $\varepsilon_2$ measure how much $\pi'$ improves on $\pi$ for $\hat{r}_{\mathcal{W}}$: $\langle\rho^{\pi'}, \hat{r}_{\mathcal{W}}\rangle - \langle\rho^{\pi}, \hat{r}_{\mathcal{W}}\rangle = \varepsilon_{2}$. If $\varepsilon_2\geq 2\varepsilon_1(1+\beta)$, then
$\mathcal{W}(\rho^{\pi'}, \beta\rho^{\pi'}+\mu_n(\beta-1)) > \mathcal{W}(\rho^{\pi}, \beta\rho^{\pi}+\mu_n(\beta-1))$.
\end{theorem*}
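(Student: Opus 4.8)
The argument will mirror that of Theorem~\ref{thm:kl_reward_model}, but built on the variational (Kantorovich) characterisation of the Wasserstein distance in Equation~\ref{def:wasserstein} rather than on an explicit density ratio. (Throughout, I read the second argument of $\mathcal{W}$ as $\beta\rho+(1-\beta)\mu_{n}$, consistently with Equations~\ref{def:wasserstein}--\ref{obj:wasserstein}.) The structural fact that makes the proof short is this: the optimiser $f^{*}=r^{\pi}_{\mathcal{W}}$ of the problem defining $\mathcal{W}(\rho^{\pi},\beta\rho^{\pi}+(1-\beta)\mu_{n})$ is a $1$-Lipschitz function for the temporal distance, and the feasible set $\{f:\|f\|\le 1\}$ of Equation~\ref{def:wasserstein} does not depend on the policy. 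Hence $f^{*}$ is an admissible test function for the Wasserstein problem at $\pi'$ too: plugging it in yields a \emph{lower} bound on $\mathcal{W}(\rho^{\pi'},\beta\rho^{\pi'}+(1-\beta)\mu_{n})$, whereas at $\pi$ it attains that value exactly.

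Concretely I would proceed in four steps. (1) Use linearity of the pairing in each measure to write $\mathcal{W}(\rho^{\pi'},\beta\rho^{\pi'}+(1-\beta)\mu_{n}) \ge \langle\rho^{\pi'},f^{*}\rangle - \beta\langle\rho^{\pi'},f^{*}\rangle - (1-\beta)\langle\mu_{n},f^{*}\rangle$ and $\mathcal{W}(\rho^{\pi},\beta\rho^{\pi}+(1-\beta)\mu_{n}) = \langle\rho^{\pi},f^{*}\rangle - \beta\langle\rho^{\pi},f^{*}\rangle - (1-\beta)\langle\mu_{n},f^{*}\rangle$. (2) Subtract: the term $(1-\beta)\langle\mu_{n},f^{*}\rangle$ cancels exactly, leaving a quantity governed by $\langle\rho^{\pi'},f^{*}\rangle$ versus $\langle\rho^{\pi},f^{*}\rangle$ (with the feasibility-vs-optimality ``slack'' on the $\pi'$ side only helping). (3) Replace $f^{*}$ by its estimate $\hat r_{\mathcal{W}}$ using $\|\hat r_{\mathcal{W}}-r^{\pi}_{\mathcal{W}}\|_{\infty}\le\varepsilon_{1}$, and track each occurrence: $\rho^{\pi'}$ (resp.\ $\rho^{\pi}$) is paired with $f^{*}$ once with weight $1$ and once with weight $\beta$, so each substitution costs at most $(1+\beta)\varepsilon_{1}$, hence $2(1+\beta)\varepsilon_{1}$ in total. (4) Invoke the definition of $\varepsilon_{2}$, i.e.\ $\langle\rho^{\pi'},\hat r_{\mathcal{W}}\rangle-\langle\rho^{\pi},\hat r_{\mathcal{W}}\rangle=\varepsilon_{2}$, to conclude that $\mathcal{W}(\rho^{\pi'},\beta\rho^{\pi'}+(1-\beta)\mu_{n}) - \mathcal{W}(\rho^{\pi},\beta\rho^{\pi}+(1-\beta)\mu_{n})$ is lower bounded by a quantity that is nonnegative as soon as $\varepsilon_{2}\ge 2\varepsilon_{1}(1+\beta)$, and strictly positive once the improvement is genuine (e.g.\ $f^{*}$ not already optimal at $\pi'$), which is the claim.

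The main obstacle is the accounting in step (3): one has to keep track of every appearance of $\rho^{\pi'}$ and $\rho^{\pi}$ and of the weights ($1$ as the first argument, $\beta$ inside the second) with which they are tested against $f^{*}$, so that the $\varepsilon_{1}$-error is correctly accumulated into the threshold $2\varepsilon_{1}(1+\beta)$; a careless pass gives the wrong constant. A secondary point worth stating explicitly is why no analogue of the ``$\rho^{\pi'}$ close to $\rho^{\pi}$'' hypothesis (the $\varepsilon_{0}$ of Theorem~\ref{thm:kl_reward_model}) is required: the Kantorovich dual is \emph{linear} in both measures, so the perturbation $f^{*}\mapsto\hat r_{\mathcal{W}}$ propagates purely additively and there is no logarithmic correction (no counterpart of $\log(1-\varepsilon_{0})$) to absorb; the bound therefore rests on $\varepsilon_{1}$ and $\varepsilon_{2}$ alone. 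Finally, one should note for rigour that the $1$-Lipschitz constraint under the temporal distance is a property of the transition kernel $P$ and not of the policy, which is exactly what licenses re-using $f^{*}$ as a test function at $\pi'$.
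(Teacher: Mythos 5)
Your proposal follows essentially the same route as the paper's proof: expand both Wasserstein distances via Kantorovich duality, use $f^{*}=r^{\pi}_{\mathcal{W}}$ as the exact maximizer at $\pi$ and as a merely feasible ($1$-Lipschitz) test function at $\pi'$, cancel the $\langle\mu_n,f^{*}\rangle$ term, swap $f^{*}$ for $\hat r_{\mathcal{W}}$ at a cost of $\varepsilon_1$ per pairing, and finish with the definition of $\varepsilon_2$. The only divergence is bookkeeping: your expansion correctly produces the coefficient $(1-\beta)$ on $\langle\rho, f\rangle$ where the paper writes $(1+\beta)$, but under either accounting the hypothesis $\varepsilon_2\geq 2\varepsilon_1(1+\beta)$ suffices, so the argument goes through.
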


\begin{proof}
By definition : 
\begin{align*}
    \mathcal{W}(\rho^{\pi}, \beta\rho^{\pi}+\mu_n(\beta-1)) &= \underset{\substack{f \\ \text{s.t.} \\ \parallel f\parallel \leq 1}}{\max} \underset{s\sim \rho^{\pi}}{\mathbb{E}}\left[f(s)\right] - \underset{s\sim \rho^{\pi}\beta + (1-\beta)\mu_n}{\mathbb{E}}\left[f(s)\right] 
    \\
    &= \underset{\substack{f \\ \text{s.t.} \\ \parallel f\parallel \leq 1}}{\max} \langle\rho^{\pi}, f\rangle- \langle(\beta\rho^{\pi} + (1-\beta)\mu_n), f\rangle
    \\
    &= \underset{\substack{f \\ \text{s.t.} \\ \parallel f\parallel \leq 1}}{\max} (1+\beta)\langle\rho^{\pi}, f\rangle- (1-\beta)\langle\mu_n, f\rangle
\end{align*}

We wish to define a lower bound of this quantity which incorporates $\hat{r}_{\mathcal{W}}$ so that we can later use the definition of $\varepsilon_2$.
By definition, $r^\pi_{\mathcal{W}}$ is one solution $f*$ of this maximization problem and one can write : 
$$
\mathcal{W}(\rho^{\pi}, \beta\rho^{\pi}+\mu_n(\beta-1)) = (1+\beta)\langle\rho^{\pi}, r^\pi_{\mathcal{W}}\rangle- (1-\beta)\langle\mu_n, r^\pi_{\mathcal{W}}\rangle
$$

Recall that 
$\|\hat{r}_{\mathcal{W}} - r^\pi_{\mathcal{W}}\|_{\infty} \leq \varepsilon_1$, that is $|\hat{r}_{\mathcal{W}}(s) - r^\pi_{\mathcal{W}}(s)| \leq \varepsilon_1$ for all $s$. So we have two inequalities $-\varepsilon_1\leq \hat{r}_{\mathcal{W}}(s) - r^\pi_{\mathcal{W}}(s) \leq \varepsilon_1$ for all $s$. Therefore, $ \hat{r}_{\mathcal{W}}(s) + \varepsilon_1 \geq r^\pi_{\mathcal{W}}(s)$ for all s and $(1+\beta)\langle\rho^{\pi}, r^\pi_{\mathcal{W}}\rangle \leq (1+\beta)\langle\rho^{\pi}, \hat{r}_{\mathcal{W}} + \varepsilon_1\rangle$.
So:
\begin{align*}
    (1+\beta)\langle\rho^{\pi}, r^\pi_{\mathcal{W}}\rangle- (1-\beta)\langle\mu_n, r^\pi_{\mathcal{W}}\rangle &\leq (1+\beta)\langle\rho^{\pi}, \hat{r}_{\mathcal{W}} + \varepsilon_1 \rangle- (1-\beta)\langle\mu_n,r^\pi_{\mathcal{W}}\rangle \\
    & \leq (1+\beta)\langle\rho^{\pi}, \hat{r}_{\mathcal{W}}\rangle - (1-\beta)\langle\mu_n,r^\pi_{\mathcal{W}}\rangle + \varepsilon_1(1+\beta)\\
\end{align*}

By definition $\langle\rho^{\pi'}, \hat{r}_{\mathcal{W}}\rangle - \langle\rho^{\pi}, \hat{r}_{\mathcal{W}}\rangle = \varepsilon_{2}$. So:
$$
(1+\beta)\langle\rho^{\pi}, \hat{r}_{\mathcal{W}}\rangle - (1-\beta)\langle\mu_n,r^\pi_{\mathcal{W}}\rangle + \varepsilon_1(1+\beta) = (1+\beta)\langle\rho^{\pi'}, \hat{r}_{\mathcal{W}}\rangle - (1-\beta)\langle\mu_n,r^\pi_{\mathcal{W}}\rangle + \varepsilon_1(1+\beta) - \varepsilon_2
$$

As previously, since $\|\hat{r}_{\mathcal{W}} - r^\pi_{\mathcal{W}}\|_{\infty} \leq \varepsilon_1$, one has $|\hat{r}_{\mathcal{W}}(s) - r^\pi_{\mathcal{W}}(s)| \leq \varepsilon_1$ for all $s$. And hence $\hat{r}_{\mathcal{W}}(s)\leq r^\pi_{\mathcal{W}}(s) + \varepsilon_1$. Consequently: 
\begin{align*}
(1+\beta)\langle\rho^{\pi'}, \hat{r}_{\mathcal{W}}\rangle - (1-\beta)\langle\mu_n,r^\pi_{\mathcal{W}}\rangle + \varepsilon_1(1+\beta) - \varepsilon_2 
&\leq  
(1+\beta)\langle\rho^{\pi'}, r^\pi_{\mathcal{W}} + \varepsilon_1\rangle - (1-\beta)\langle\mu_n,r^\pi_{\mathcal{W}}\rangle + \varepsilon_1(1+\beta) - \varepsilon_2 \\
&\leq 
(1+\beta)\langle\rho^{\pi'}, r^\pi_{\mathcal{W}}\rangle - (1-\beta)\langle\mu_n,r^\pi_{\mathcal{W}}\rangle + 2\varepsilon_1(1+\beta) - \varepsilon_2
\end{align*}

Here, $(1+\beta)\langle\rho^{\pi'}, r^\pi_{\mathcal{W}}\rangle - (1-\beta)\langle\mu_n,r^\pi_{\mathcal{W}}\rangle$ is not exactly equal to  $\mathcal{W}(\rho^{\pi'}, \beta\rho^{\pi'}+\mu_n(\beta-1))$ because $r^\pi_{\mathcal{W}}$ was in fact a solution to the problem $\mathcal{W}(\rho^{\pi}, \beta\rho^{\pi}+\mu_n(\beta-1))$. Hopefully, since by definition, $r^\pi_{\mathcal{W}}$ is 1-Lipshitz continuous with respect to the temporal distance, we have necessarily: 
\begin{align*}
(1+\beta)\langle\rho^{\pi'}, r^\pi_{\mathcal{W}}\rangle - (1-\beta)\langle\mu_n,r^\pi_{\mathcal{W}}\rangle &\leq \underset{\substack{f \\ \text{s.t.} \\ \parallel f\parallel \leq 1}}{\max} (1+\beta)\langle\rho^{\pi'}, f\rangle- (1-\beta)\langle\mu_n, f\rangle \\
&\leq \mathcal{W}(\rho^{\pi'}, \beta\rho^{\pi'}+\mu_n(\beta-1))
\end{align*}

Putting it all together we have: 
$$
\mathcal{W}(\rho^{\pi}, \beta\rho^{\pi}+\mu_n(\beta-1))
\leq \mathcal{W}(\rho^{\pi'}, \beta\rho^{\pi'}+\mu_n(\beta-1)) + 2\varepsilon_1(1+\beta)-\varepsilon_2
$$

Finally, if $\varepsilon_2 - 2\varepsilon_1(1+\beta) \geq 0$ we guarantee that $\mathcal{W}(\rho^{\pi'}, \beta\rho^{\pi'}+\mu_n(\beta-1))$ is larger than $\mathcal{W}(\rho^{\pi}, \beta\rho^{\pi}+\mu_n(\beta-1))$, which concludes the proof.
\end{proof}


\section{Density measures}
Figure \ref{fig:densities} illustrates the reward model employed by $\text{RAMP}_{\textcolor{lightcoral}{\mathcal{W}}}$ within the locomotion tasks.

\begin{figure}[h]
    \centering
    \textbf{Density estimated by $\text{RAMP}_{\textcolor{lightcoral}{\mathcal{W}}}$ in the locomotion tasks} 
    \vspace{0.5cm} 
    
    \begin{subfigure}{0.3\textwidth}
        \centering
        \includegraphics[width=\textwidth]{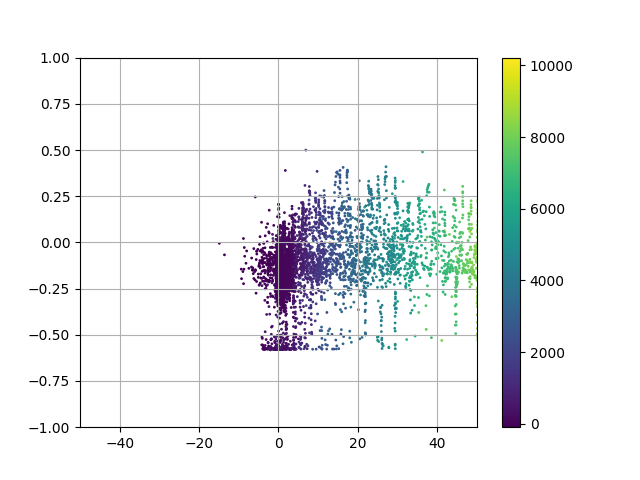} 
        \caption{HalfCheetah}
    \end{subfigure}
    \hfill
    \begin{subfigure}{0.3\textwidth}
        \centering
        \includegraphics[width=\textwidth]{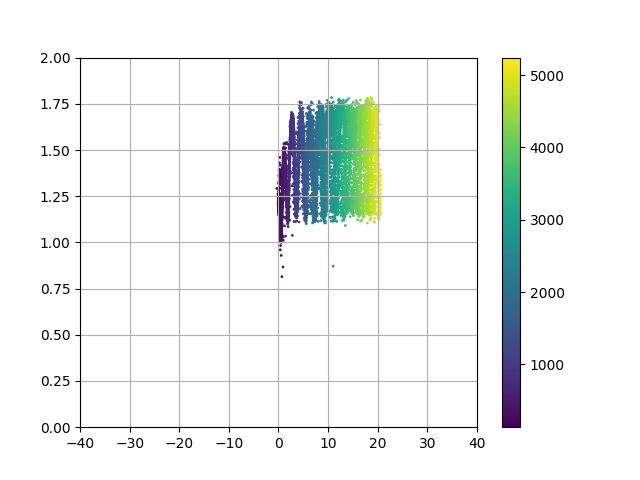} 
        \caption{Hopper}
    \end{subfigure}
    \hfill
    \begin{subfigure}{0.3\textwidth}
        \centering
        \includegraphics[width=\textwidth]{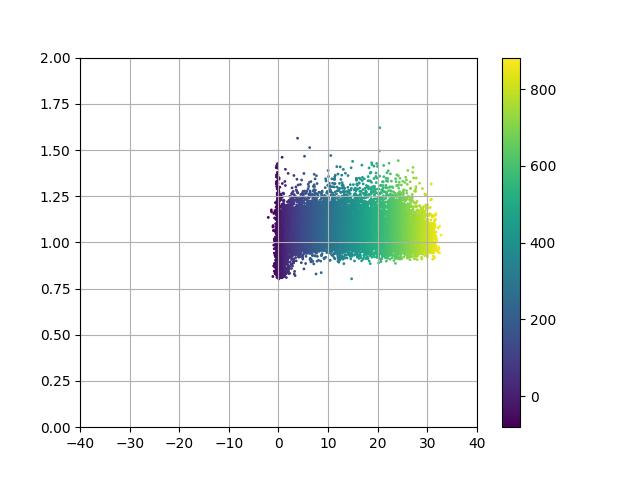} 
        \caption{Walker}
    \end{subfigure}

    \vspace{1cm} 

    \begin{subfigure}{0.30\textwidth}
        \centering
        \includegraphics[width=\textwidth]{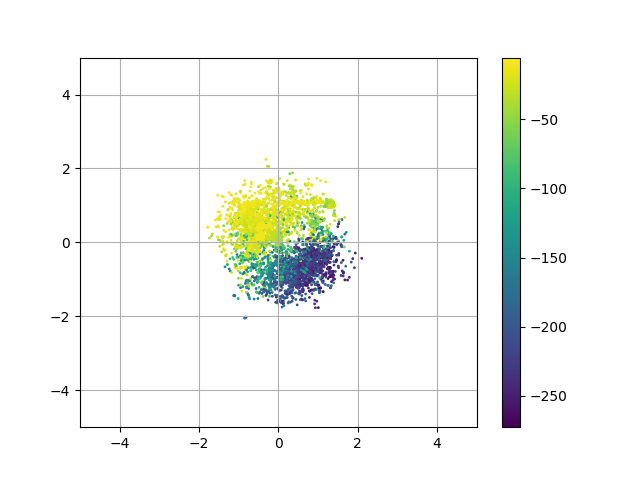} 
        \caption{Humanoid}
    \end{subfigure}
    \begin{subfigure}{0.30\textwidth}
        \centering
        \includegraphics[width=\textwidth]{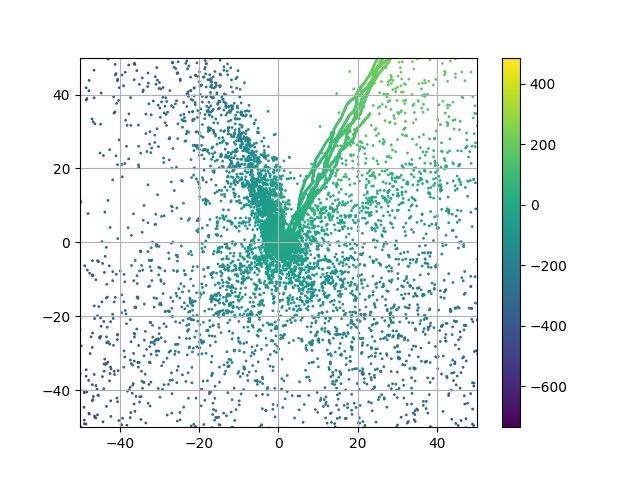} 
        \caption{Ant}
    \end{subfigure}
    
    \caption{Euclidean coordinates of the states contained in the buffers used by RAMP with the color indicating the output of $f_{\phi}$.}
    \label{fig:densities} 
\end{figure}

\section{Constructing $\mathcal{D}_{\mu_n}$}
\label{ap:mu_n_explanation}

One approach to building $\mathcal{D}_{\mu_n}$ is to initialize a buffer $\mathcal{D}_{\mu_0}$ with a fixed size using a random policy and then replace elements in that list to incorporate the agent experiences. 
Specifically, each time the policy is updated, the sample $\mathcal{D}_{\mu_n}$ regrouping the past experiences of the agent should be updated. 
For some algorithms, such as PPO \citep{schulman2017proximal}, updating $\mathcal{D}_{\mu_n}$ is straightforward because the distribution $\rho^{\pi}$ can be clearly identified between updates. 
In this case, $\mathcal{D}_{\mu_{n+1}}$ is constructed by combining a proportion $\beta$ of samples from $\mathcal{D}_{\rho^{\pi}}$ with a proportion $(1-\beta)$ of samples from $\mathcal{D}_{\mu_n}$ each time the policy is updated. 
For SAC, where the policy is updated every step in the environment, each new state serves as a single sample for that policy. 
To determine whether a state should be added in $\mathcal{D}_{\mu_{n+1}}$, an accept-reject procedure using a Bernoulli distribution with parameter $\beta$ is applied for each step. 
The current state replaces a random element in the list $\mathcal{D}_{\mu_n}$ when accepted.

\section{Reminder on Contrastive Learning}
\label{ap:contrastive}
By definition, a contrastive learning problem is a specific type of classification problem. This scenario usually involves a neural network $f_{\phi} : \mathcal{S} \to \mathbb{R}$ that employs a sigmoid activation function $\sigma$ in its final layer, focusing on two classes. Given a distribution $S$ over $\mathcal{S}$, a standard classification problem can be expressed as:
$$
\mathcal{L}(\phi) = \underset{s\sim S}{\mathbb{E}}
\left[ l(y(s), \sigma(f_{\phi}(s)))\right]
$$
where $y(s)$ assigns a label of 0 or 1 to any sample $s$, and $l$ represents a loss function which attains $0$ when $y(s) = \sigma(f_{\phi}(s))$. Typically, $l$ is a Cross-Entropy loss, which can be retrieved via a Kullback-Leibler divergence between two Bernoulli distributions $\mathcal{B}$, parameterized respectively by $y(s)$ and $\sigma(f_{\phi}(s))$:
\begin{align*}
    \mathcal{L}(\phi) &= \underset{s\sim S}{\mathbb{E}}\left[\text{KL}(\mathcal{B}(y(s)) || \mathcal{B}(\sigma(f_{\phi}(s))))\right] \\
    &= \underset{s\sim S}{\mathbb{E}}\left[y(s)\log\left(\frac{y(s)}{\sigma(f_{\phi}(s))}\right) + (1-y(s))\log\left(\frac{1-y(s)}{1-\sigma(f_{\phi}(s))}\right) \right] \\
    &= \underset{s\sim S}{\mathbb{E}}\left[\underbrace{\text{H}(\mathcal{B}(y(s)) | \mathcal{B}(\sigma(f_{\phi}(s))))}_{\text{Cross-entropy}} - \underbrace{H(\mathcal{B}(y(s)))}_{\text{Shannon entropy}} \right] \\
    &= \underset{s\sim S}{\mathbb{E}}\left[\underbrace{\text{H}(\mathcal{B}(y(s)) | \mathcal{B}(\sigma(f_{\phi}(s))))}_{\text{Cross-entropy}} - \underbrace{H(\mathcal{B}(y(s)))}_{\text{cte}} \right]
\end{align*}

In contrastive learning, the classification problem typically involves two random variables, $S_{+}$ and $S_{-}$, and the target function $y(s)$ defined as: 
\begin{align*}
y(s) = \left\{
\begin{aligned}
 1 \: \text{if} \: s\sim S_{+}\\
 0 \: \text{if} \: s\sim S_{-}
\end{aligned}
\right.
\end{align*}
When the probabilities of sampling data from $S_{+}$ and $S_{-}$ are equal, the classification loss can be reformulated as:
$$
\mathcal{L}(\phi) =- \underset{s\sim S_{+}}{\mathbb{E}}\left[\log\left(\sigma(f_{\phi}(s)\right)\right] - \underset{s\sim S_{-}}{\mathbb{E}}\left[\log\left(1-\sigma(f_{\phi}(s)\right)\right]
$$

\section{Why RAMP Mirrors a Two-Player Cooperative Game?}
\label{sec:cooperative_game}

As explained by \citet{lee2019efficient,jarrett2022curiosity}, uncertainty-based methods described in Section \ref{sec:related_work} can be interpreted as adversarial two-player games. In such frameworks, the policy (player 1) evolves its occupancy measure using a Reinforcement Learning algorithm to promote exploration, while the uncertainty measure, encoded through a function approximator, attempts to minimize its error/loss (representing the uncertainty) on samples drawn from the current and/or past occupancy measures. Although effective in some environments, these methods are prone to early equilibria in adversarial settings, often leading to stagnation in exploration.

In contrast, the two methods enhanced by RAMP are better characterized as two-player cooperative games, formulated as maximization problems for both players.

\subsection*{$\bm{\text{RAMP}_{\textcolor{lightblue}{\text{KL}}}}$}

The classification task for $f_{\phi}$, described in Section \ref{sec:int_reward_models}, involves maximizing $f_{\phi}$ over the distribution $\rho^{\pi}$. To simplify, we assume \(\beta \ll 1\), consistent with experimental setups where small values of \(\beta\) are used (e.g., \(\beta = 7 \cdot 10^{-3}\)):

\begin{align*}
    \mathcal{G}(\phi) &= \langle \rho^{\pi}, \log(\sigma(f_{\phi}))\rangle + 
    \langle \beta\rho^{\pi}+(1-\beta)\mu_n, \log(1-\sigma(f_{\phi})) \rangle \\
    &\approx \langle \rho^{\pi}, \log(\sigma(f_{\phi}))\rangle + 
    \langle \mu_n, \log(1-\sigma(f_{\phi})) \rangle
\end{align*}

The policy aims to maximize the expectation $\langle\rho^{\pi}, f_{\phi}\rangle$. Since both $\log$ and $\sigma(x) = \frac{1}{1+e^{-x}}$ are strictly monotonic, this is equivalent to maximizing $\langle\rho^{\pi}, \log(\sigma(f_{\phi}))\rangle$. Consequently, the overall optimization problem can be expressed as a cooperative game:

\begin{equation}
    \underset{\pi}{\max} \, \underset{\phi}{\max} \left( 
    \langle\rho^{\pi}, \log(\sigma(f_{\phi}))\rangle + \langle \mu_n, \log(1-\sigma(f_{\phi})) \rangle
    \right)
    \label{eq:ramp_kl_coop}
\end{equation}

In this setting, \(\pi\) and \(\phi\) adapt to one another to yield a distribution \(\rho^{\pi}\) that is as distinguishable as possible from the past distribution \(\mu_n\).

\subsection*{$\bm{\text{RAMP}_{\textcolor{lightcoral}{\mathcal{W}}}}$}

The optimization problem featured by \(\text{RAMP}_{\textcolor{lightcoral}{\mathcal{W}}}\) is defined as:

\begin{equation}
    \underset{\pi}{\max} \, \underset{\substack{\phi \\ \text{s.t.} \\ \|f_{\phi}\|_{d^{\text{temp}}} \leq 1}}{\max} 
    \left( \langle(1-\beta)\rho^{\pi}, f_{\phi}\rangle - \langle(1-\beta)\mu_{n}, f_{\phi}\rangle \right)
    \label{eq:ramp_w_coop}
\end{equation}

Similar to Equation \ref{eq:ramp_kl_coop}, \(\pi\) and \(\phi\) cooperate to find a distribution \(\rho^{\pi}\) that is maximally distinguishable from \(\mu_n\). Additionally, the classifier \(f_{\phi}\) is constrained to be 1-Lipschitz with respect to the temporal distance, ensuring that \(\rho^{\pi}\) (in expectation) is as temporally distant as possible from \(\mu_n\). This encourages exploration over regions of the state space that are significantly distinct from those already visited.

\section{Why is maximizing $\text{D}_{\text{KL}}(\rho^{\pi} \,||\, \beta\rho^{\pi}+(1-\beta)\mu_{n})$ really running away from the past?}
\label{ap:why}
This study posits that maximizing the following Kullback-Leibler divergence: 

$$
\text{D}_{\text{KL}}(\rho^{\pi} \,||\, \beta\rho^{\pi} + (1-\beta)\mu_{n})
$$

with respect to \(\pi\), generates a distribution \(\rho^{\pi}\) that deviates maximally from the current mixture of past distributions \(\mu_n\). However, the distribution used for comparison, \(\beta \rho^{\pi} + (1-\beta) \mu_n\), also includes \(\rho^{\pi}\) itself. 
While one might consider maximizing $\text{D}_{\text{KL}}(\rho^{\pi} \,||\, \mu_{n})$, this alternative KL divergence is not always well-defined, particularly when the supports of \(\rho^{\pi}\) and \(\mu_n\) are disjoint. Notably, Proposition \ref{pro:kl_in_kl} indicates that the distributions solving the problem $\underset{\rho}{\argmax}\text{D}_{\text{KL}}(\rho \,||\, (\beta\rho + (1-\beta)\mu_{n}))$ are contained within the solutions to the problem $\underset{\rho}{\argmax}\text{D}_{\text{KL}}(\rho \,||\, \mu_{n})$.

\begin{proposition}
\label{pro:kl_in_kl}
Let \(\mu_n \in \Delta(S)\) such that there exists \(s \in \mathcal{S}\) for which \(\mu_n(s) = 0\). Let $\Delta^{*}_{1}(S)=\underset{\rho}{\argmax}\:\text{D}_{\text{KL}}(\rho||(\beta\rho + (1-\beta)\mu_{n})$ and $\Delta^{*}_{2}(S)=\underset{\rho}{\argmax}\:\text{D}_{\text{KL}}(\rho||\mu_{n})$. The set of distributions $\Delta^{*}_{1}(S)$ is a subset of the distributions $\Delta^{*}_{2}(S)$ i.e. $\Delta^{*}_{1}(S) \subset \Delta^{*}_{2}(S)$
\end{proposition}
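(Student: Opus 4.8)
The plan is to give explicit descriptions of the two optimizer sets and then observe that the first is literally contained in the second. Throughout, write $Z = \{s \in \mathcal{S} : \mu_n(s) = 0\}$, which is nonempty by hypothesis, and abbreviate $m_\rho = \beta\rho + (1-\beta)\mu_n$ for the ``self-referential'' mixture appearing in $\Delta^{*}_{1}(S)$.

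First I would bound $\text{D}_{\text{KL}}(\rho \,||\, m_\rho)$ uniformly in $\rho$. On $\supp(\rho)$ one has $m_\rho(s) \geq \beta\rho(s) > 0$, hence $\rho(s)/m_\rho(s) \leq 1/\beta$, and integrating the pointwise inequality $\rho(s)\log(\rho(s)/m_\rho(s)) \leq \rho(s)\log(1/\beta)$ over $\supp(\rho)$ gives $\text{D}_{\text{KL}}(\rho \,||\, m_\rho) \leq \log(1/\beta)$ for every $\rho \in \Delta(S)$. Equality in the pointwise step forces $\rho(s)/m_\rho(s) = 1/\beta$ for $\rho$-almost every $s$, i.e. $(1-\beta)\mu_n(s) = 0$, i.e. $s \in Z$; so $\text{D}_{\text{KL}}(\rho \,||\, m_\rho) = \log(1/\beta)$ holds exactly when $\supp(\rho) \subseteq Z$. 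Since $Z \neq \emptyset$, such distributions $\rho$ exist, so the supremum equals $\log(1/\beta)$ and is attained, yielding $\Delta^{*}_{1}(S) = \{\rho \in \Delta(S) : \supp(\rho) \subseteq Z\}$.

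Next I would observe that any $\rho$ with $\supp(\rho) \subseteq Z$ — in particular any element of $\Delta^{*}_{1}(S)$ — has $\mu_n = 0$ throughout a set carrying all of $\rho$'s mass, so $\text{D}_{\text{KL}}(\rho \,||\, \mu_n) = +\infty$. Since such $\rho$ exist, $\sup_\rho \text{D}_{\text{KL}}(\rho \,||\, \mu_n) = +\infty$, and $\Delta^{*}_{2}(S)$ is precisely the set of $\rho$ attaining $+\infty$. Hence every $\rho \in \Delta^{*}_{1}(S)$ lies in $\Delta^{*}_{2}(S)$, which is the claimed inclusion $\Delta^{*}_{1}(S) \subseteq \Delta^{*}_{2}(S)$.

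The only delicate point — the ``hard part'' — is pinning down the equality case of the bound $\text{D}_{\text{KL}}(\rho \,||\, m_\rho) \leq \log(1/\beta)$, i.e. verifying that the maximizers of this self-referential divergence are exactly the distributions concentrated on $Z$ and that the supremum is genuinely attained (so that $\argmax$ is nonvacuous); once that characterization is in place, the inclusion follows with no further work. In a general, non-discrete state space one reads ``$\mu_n = 0$ on $\supp(\rho)$'' in the $\rho$-almost-everywhere sense and replaces sums by integrals, but nothing else in the argument changes.
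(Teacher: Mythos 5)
Your proof is correct and takes essentially the same route as the paper's: bound the self-referential divergence pointwise by $\log(1/\beta)$, characterize the maximizers as distributions supported where $\mu_n$ vanishes, and observe that any such distribution has $\text{D}_{\text{KL}}(\rho\,\|\,\mu_n)=+\infty$ and hence lies in $\Delta^{*}_{2}(S)$. If anything, your handling of the equality case (deriving $\supp(\rho)\subseteq Z$ directly from when the pointwise bound is tight, and noting attainment because $Z\neq\emptyset$) is more explicit than the paper's informal "support as disjoint as possible" argument.
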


\begin{proof}

Let's define $\Delta^{*}_{1}(S)=\underset{\rho}{\argmax}\:\text{D}_{\text{KL}}(\rho||(\beta\rho + (1-\beta)\mu_{n})$, $\Delta^{*}_{2}(S)=\underset{\rho}{\argmax}\:\text{D}_{\text{KL}}(\rho||\mu_{n})$  and $\Delta^{*}_{3}(S)=\{ \rho \in \Delta(\mathcal{S}) | \forall s \in \mathcal{S}, \rho(s)>0 \implies \mu_n(s) = 0  \}$.

The first step is to show $\Delta^{*}_{1}(S) = \Delta^{*}_{3}(S)$. By definition:
$$\text{D}_{\text{KL}}(\rho||(\beta\rho + (1-\beta)\mu_{n}) = \int_{\mathcal{S}} \rho(s)\log\left(\frac{\rho(s)}{\beta\rho(s)+(1-\beta)\mu_n(s)}\right)\text{d}s$$

Since $\mu_n(s) \geq 0 \: \forall s \in \mathcal{S}$:
$$
\log\left(\frac{\rho(s)}{\beta\rho(s)+(1-\beta)\mu_n(s)}\right) \leq \log\left(\frac{1}{\beta}\right)
$$
And this upper bound is reached whenever $\mu_n(s)=0$.
Additionally, as $\beta \in (0,1)$,   $\log\left(\frac{1}{\beta}\right)\geq 0$. So maximizing this $\text{D}_{\text{KL}}$ corresponds to having distributions $\rho$ that have a support as disjoint as possible from the support of $\mu_n$ (because $\log\left(\frac{1}{\beta}\right)$ would be integrated over greater portion of $\mathcal{S}$).
Therefore, the set of optimal solutions $\Delta^{*}_{2}(S)$ may be rewritten as $\{ \rho \in \Delta(\mathcal{S}) | \forall s \in \mathcal{S}, \rho(s)>0 \implies \mu_n(s) = 0  \}$ which is by definition $\Delta^{*}_{3}$. 
\\
\\
\underline{Note}: The definition of $\rho$ depends on an initial distribution $\delta_{0}$. Since the successive $\rho_{k\in[0,n]}$ are used to fill $\mu_n$, the supports of $\mu_n$ and $\rho$ is probably not disjoint and  $\{ \rho \in \Delta(\mathcal{S}) | \forall s \in \mathcal{S}, \rho(s)>0 \implies \mu_n(s) = 0  \}$ is empty. 
\\
\\
Additionally, since $\forall \rho \in \Delta^{*}_{3}(S)$ $\text{D}_{\text{KL}}(\rho||\mu_{n}) = +\infty$ (because $\log(\frac{c}{0})=+\infty$), and every element of $\Delta^{*}_{3}$ is in $\Delta^{*}_{2}$ and so $\Delta^{*}_{1}(S) \subset \Delta^{*}_{2}(S)$ which ends the proof.
\end{proof}


\section{Benchmarks}
\label{ap:benchmarks}

\begin{figure}[ht]
    \centering
    
    \textbf{Mazes} \\[0.5cm]
    \begin{minipage}{\textwidth}
        \centering
        \begin{subtable}[b]{0.3\textwidth}
            \centering
            \includegraphics[width=\textwidth]{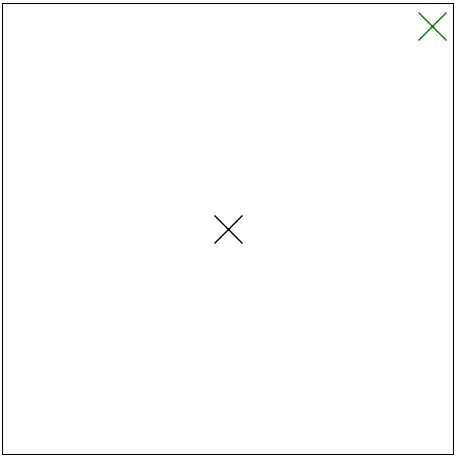}
            \caption{Easy-maze}
            \label{fig:maze1}
        \end{subtable}%
        \hfill
        \begin{subtable}[b]{0.3\textwidth}
            \centering
            \includegraphics[width=\textwidth]{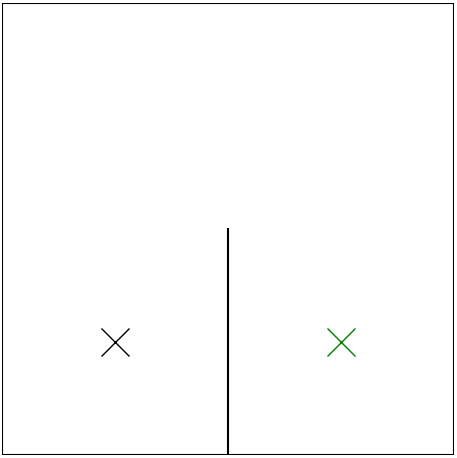}
            \caption{U-maze}
            \label{fig:maze2}
        \end{subtable}%
        \hfill
        \begin{subtable}[b]{0.3\textwidth}
            \centering
            \includegraphics[width=\textwidth]{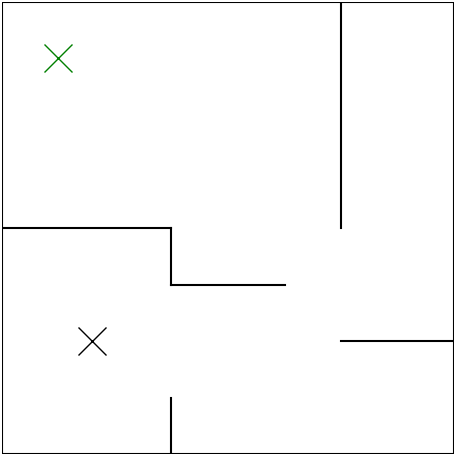}
            \caption{Hard-maze}
            \label{fig:maze3}
        \end{subtable}
    \end{minipage} \\[1cm]
\end{figure}

\begin{figure}[ht]
    \centering
    \textbf{Locomotion Tasks} \\[0.5cm]
    \begin{minipage}{\textwidth}
        \centering
        \begin{subtable}[b]{0.25\textwidth}
            \centering
            \includegraphics[width=\textwidth]{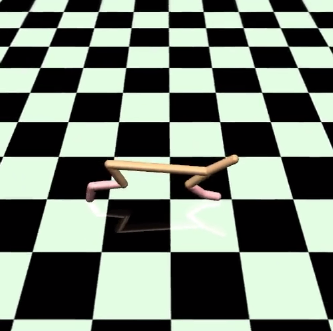}
            \caption{HalfCheetah}
            \label{fig:locomotion1}
        \end{subtable}%
        \hfill
        \begin{subtable}[b]{0.25\textwidth}
            \centering
            \includegraphics[width=\textwidth]{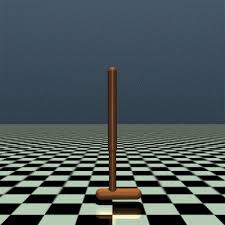}
            \caption{Hopper}
            \label{fig:locomotion2}
        \end{subtable}%
        \hfill
        \begin{subtable}[b]{0.25\textwidth}
            \centering
            \includegraphics[width=\textwidth]{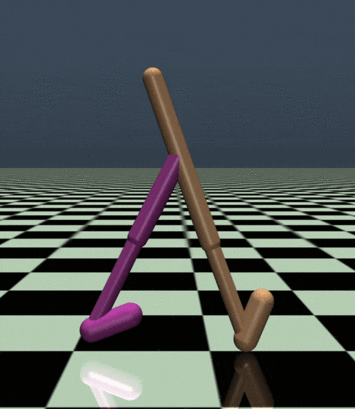}
            \caption{Walker}
            \label{fig:locomotion3}
        \end{subtable}%
        \\[0.5cm]
        \begin{subtable}[b]{0.25\textwidth}
            \centering
            \includegraphics[width=\textwidth]{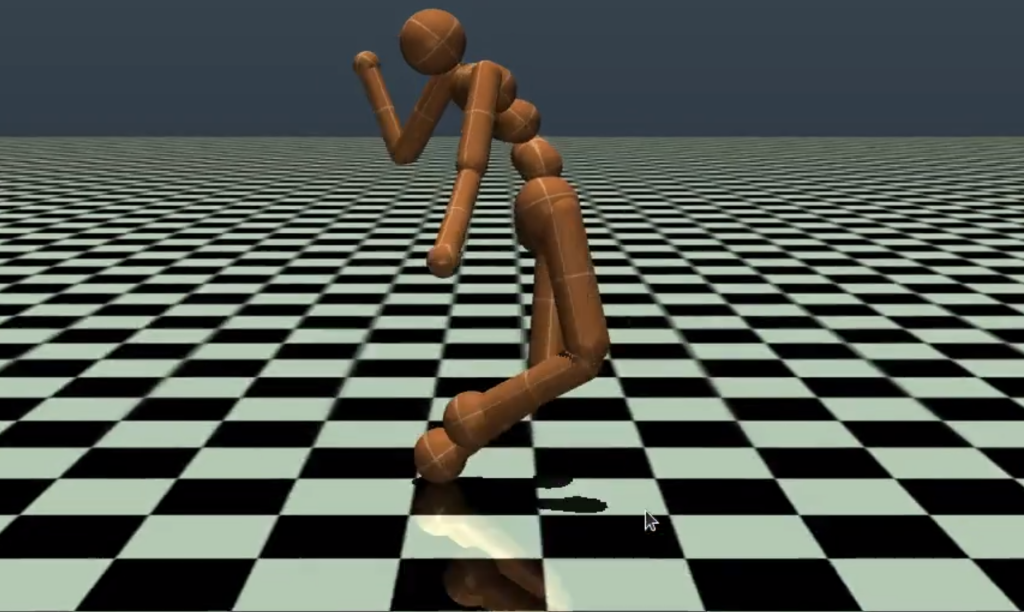}
            \caption{Humanoid}
            \label{fig:locomotion4}
        \end{subtable}%
        \hfill
        \begin{subtable}[b]{0.25\textwidth}
            \centering
            \includegraphics[width=\textwidth]{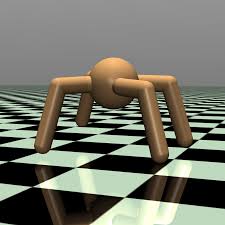}
            \caption{Ant}
            \label{fig:locomotion5}
        \end{subtable}
    \end{minipage} \\[1cm]
\end{figure}

\begin{figure}[ht]
    \centering
    \textbf{Robotic Tasks} \\[0.5cm]
    \begin{minipage}{\textwidth}
        \centering
        \begin{subtable}[b]{0.3\textwidth}
            \centering
            \includegraphics[width=\textwidth]{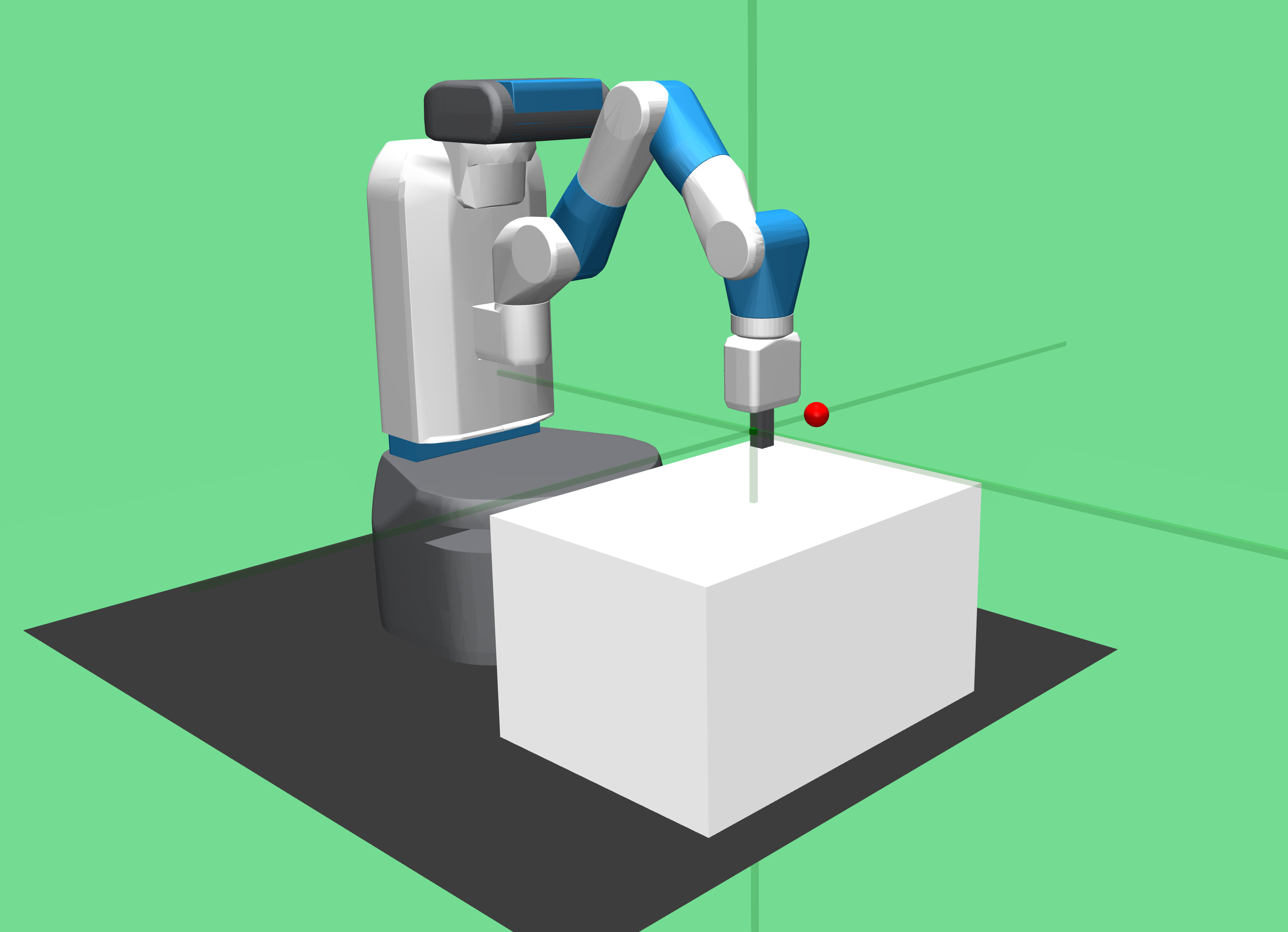}
            \caption{Fetch Reach}
            \label{fig:robotic1}
        \end{subtable}%
        \hfill
        \begin{subtable}[b]{0.3\textwidth}
            \centering
            \includegraphics[width=\textwidth]{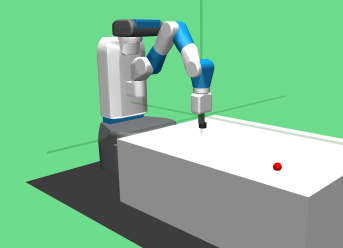}
            \caption{Fetch Slide}
            \label{fig:robotic2}
        \end{subtable}%
        \hfill
        \begin{subtable}[b]{0.3\textwidth}
            \centering
            \includegraphics[width=\textwidth]{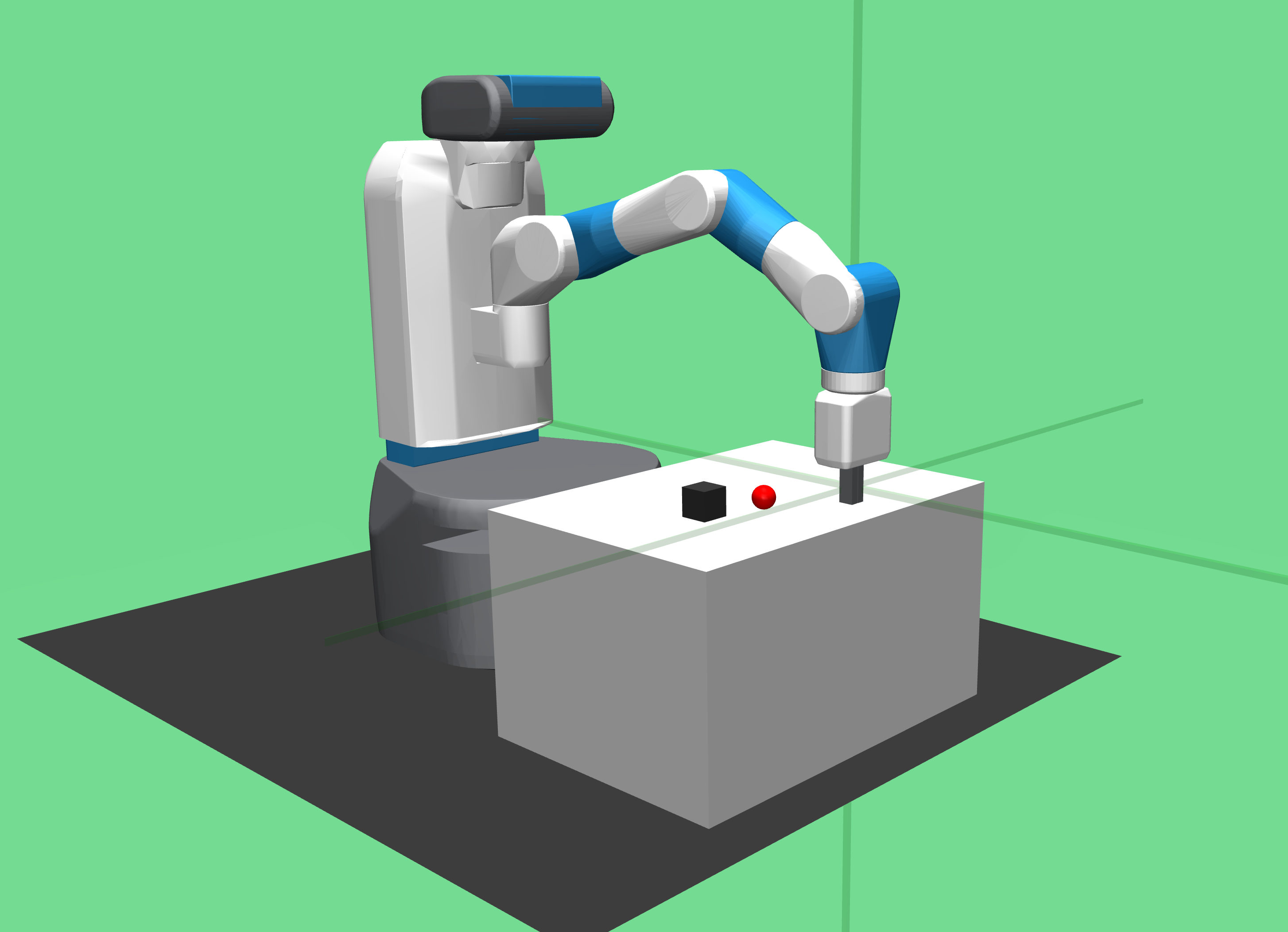}
            \caption{Fetch Push}
            \label{fig:robotic3}
        \end{subtable}
    \end{minipage}
    
    \caption{Set of tasks.}
    \label{fig:combined_tasks}
\end{figure}

Figure \ref{fig:combined_tasks} illustrates all the benchmarks used in the experiments, including our custom set of mazes. The dynamics of the mazes are based on a simple Euler integration: $s_{t+1} = s_{t} + a_t \, \text{d}t$, with $\text{d}t = 0.001$, $(s_t, a_t) \in [-1,1]^{2}$, and the initial state $s_0$ located at the black cross while the goal state $s_g$ is located at the green cross. The reward model used is $r(s,a,s')=\|s_g-s\|_2 - \|s_g-s'\|_2$.

\end{document}